\documentclass[twocolumn,10pt]{article}
\usepackage{amsmath,amssymb,amsfonts,mathrsfs}
\usepackage[round,authoryear]{natbib}
\let\cite\citep
\usepackage{graphicx}
\usepackage{dblfloatfix}
\usepackage{titling}
\usepackage{hyperref}
\hypersetup{colorlinks=true,citecolor=cyan}

\allowdisplaybreaks[0]

\newtheorem{theorem}{Theorem}
\newtheorem{definition}{Definition}
\newtheorem{lemma}{Lemma}

\newtheorem{assumption}{Assumption}
\def\rr{{\mathbb{R}}}
\def\al{{\alpha}}
\def\BibTeX{{\rm B\kern-.05em{\sc i\kern-.025em b}\kern-.08em
        T\kern-.1667em\lower.7ex\hbox{E}\kern-.125emX}}
\newenvironment{proof}{\noindent\textit{Proof.}}{\hfill$\square$\par}

\title{On Global Regulatability of Robot Manipulators\\[0.2em]
by Classical PID\thanks{This work was supported in part by National Key R\&D Program of China under Grant 2024YFA1013104, and by the National Natural Science Foundation of China under Grants U22B6001, 12288201, and 62303451. (Corresponding author: Lei Guo)}}
\author{Cheng Zhao (\texttt{zhaocheng@amss.ac.cn}) \and
Jingru Zhu (\texttt{zhujingru@amss.ac.cn}) \and
Lei Guo (\texttt{lguo@amss.ac.cn})\\[0.5em]
\small State Key Laboratory of Mathematical Sciences, AMSS, Chinese Academy of Sciences, Beijing 100190, China\\
\small School of Mathematical Sciences, University of Chinese Academy of Sciences, Beijing 100049, China}
\date{}

\begin{document}
\maketitle

\begin{abstract}
A long-standing open problem in robot manipulator control is whether global
regulation can be achieved by classical PID control. This paper provides an
answer to this question for classical PID controllers with triple parameters \((k_p,k_i,k_d)\in\mathbb R^3\).
We find and prove that for one-degree-of-freedom manipulators, the classical PID control guarantees global stability and asymptotic regulation under standard structural assumptions, and further derive explicit quantitative design conditions for the PID gains. 
However, for multi-degree-of-freedom cases, we can construct a robot manipulator satisfying the same structural assumptions for which no choice of PID gains $(k_p,k_i,k_d)$ can achieve global asymptotic regulation.
These results provide a fundamental understanding of the abovementioned open problem, revealing both the fundamental capability and intrinsic limitation  of the classical PID control for robot manipulator dynamics.
\end{abstract}
\noindent\textbf{Keywords:}
PID control, robot manipulators, Euler--Lagrange dynamics, global regulation, nonlinear systems
\medskip

\section{Introduction}\label{sec:introduction}

Feedback control is a fundamental paradigm for steering dynamical systems toward desired objectives, enabling stability, regulation, disturbance rejection, and performance improvement across a wide range of applications. Among various feedback strategies, proportional-integral-derivative (PID) control remains one of the most widely adopted control methodologies in practice due to its simple structure, low implementation cost, and remarkable engineering effectiveness, continuing to serve as a fundamental design paradigm in industrial control systems despite the rapid development of modern control theory \cite{astrom2001,Samad2017A,mao2024}.

The remarkable success of PID control naturally raises a fundamental question: why can such a simple linear feedback structure achieve reliable performance for nonlinear dynamical systems, and how can its parameters be systematically designed to guarantee closed-loop stability? 

In recent years, a systematic theoretical framework for PID control of uncertain nonlinear systems has been developed; see for example, \cite{2017zc,zhang2019,towards,guo2020,lin2022,zhu2024,li2024}.  Within this framework, rigorous PID stability results have been established for several classes of second-order nonlinear systems described by Newtonian dynamics, including necessary and sufficient conditions for global stabilization and explicit characterizations of admissible PID gain regions. These results provide a theoretical foundation for PID design beyond empirical tuning and demonstrate that PID parameters can be determined systematically rather than through trial-and-error procedures.

An important feature of this framework is its broad applicability: the analysis does not require a precise model of the system dynamics, but relies only on limited prior information, such as Lipschitz bounds and growth-rate conditions
on uncertain nonlinearities \cite{dilemma,2017zc,towards}. However, this generality may come at the cost of leaving intrinsic physical structures underexploited. Examples of such structures include conservative vector fields, energy conservation or dissipation properties, and Lagrangian structures
\cite{Brogliato,Ortega2002}. Structure-aware analysis has proved useful in related PID-type control problems, including nonlinear anti-windup design for input-saturated Euler--Lagrange systems and differential-inclusion analysis of PID-controlled systems with Coulomb friction \cite{lu1,lu2}. These developments
suggest that exploiting system structure may lead to stability guarantees that are more refined, or even stronger, than those obtained under generic uncertainty assumptions.
A recent study indicates that this is indeed possible. For example, for a class of second-order multi-input multi-output nonlinear systems with conservative position-dependent vector fields, it has been shown that classical PID controllers with scalar gains can guarantee global stabilization despite the presence of highly nonlinear dynamics \cite{xty-zc}.

Among physical systems possessing such intrinsic structures, robot manipulators represent a fundamental class of nonlinear mechanical systems characterized by Euler--Lagrange dynamics, with a positive definite inertia matrix, Coriolis and centrifugal effects, and gravitational forces \cite{kelly2005,Brogliato}. Owing to their widespread applications in industrial automation, medical robotics, and autonomous manipulation systems, robot manipulators have served as canonical benchmark systems for the development and validation of nonlinear control theories for several decades.

In the field of robot control, a notable result by Takegaki and Arimoto showed that proportional-derivative (PD) control can globally stabilize frictionless manipulators in the absence of external disturbances \cite{new1981}. However, pure PD feedback does not in general provide global regulation for arbitrary desired configurations; without gravity compensation, the desired configuration must itself be an equilibrium of the manipulator. Related work further showed that PD control combined with gravity compensation provides a simple global regulator when the gravitational torque is accurately known \cite{Arimoto}. This model-based compensation, however, requires precise model parameters, such as the payload mass, whereas pure PD control generally cannot reject persistent disturbances or eliminate steady-state errors when the desired configuration is not a natural equilibrium.

From an engineering perspective, adding an integral term appears to be a natural solution since the integral action is specifically designed to reject constant disturbances and remove steady-state errors. This motivates the use of classical PID controllers, which are widely implemented in industrial robot systems and do not require a complete dynamic model in the control law. Nevertheless, from a theoretical viewpoint, the introduction of integral action makes the stability analysis substantially more challenging. Unlike PD control whose global stability has been well established, whether the classical linear PID control can globally asymptotically stabilize robot manipulators has remained an open problem for several decades \cite{kelly1998,Rocco1996,ifac2008saturated}.

Kelly emphasized this gap, noting that a complete global stability proof for PID-controlled robot manipulators was still unavailable despite its widespread practical use \cite{kelly1998}. Subsequent studies pursued different directions, including semiglobal analysis of the classical PID controller and the development of modified PID-type structures. Semiglobal stability results were obtained for PID-type regulators with additional structural modifications \cite{Ortega1995,Alvarez-Ramirez}, while the classical PID controller itself was shown to achieve semiglobal regulation under suitable conditions \cite{auto-semiglobal}. Global regulation was also achieved in related works by introducing nonlinear integral actions, saturation mechanisms, or other structural modifications \cite{kelly1998,Sun2009,Su2010}. These results demonstrate that global regulation can be recovered by changing the controller structure, but they do not answer the fundamental question:
can the classical PID controller with its standard linear structure and scalar gains globally regulate robot manipulators?

This paper provides a dimension-dependent resolution of this question for the classical PID control with triple parameters $(k_p,k_i,k_d)$.
Specifically, we consider robot manipulators governed by standard Euler--Lagrange dynamics and controlled by classical PID feedback with gains $(k_p,k_i,k_d)\in\mathbb{R}^3$. For one-degree-of-freedom manipulators, we prove that classical PID control guarantees global stability and asymptotic regulation under standard structural assumptions, and derive explicit quantitative design conditions for the PID gains. The proof exploits the intrinsic mechanical structure of the system by constructing an energy-based Lyapunov function, consisting of a quadratic form of the augmented error states and the gravitational potential energy, which reveals how the PID gains interact with the underlying Euler--Lagrange dynamics.

In contrast, for manipulators with $n\geq2$ degrees of freedom, we construct counterexamples satisfying all the standard structural assumptions of rigid robot manipulators. For these systems, we show that for any choice of scalar PID gains $(k_p,k_i,k_d)\in\mathbb{R}^3$, the corresponding closed-loop solutions remain globally defined for all initial conditions, yet the regulation objective fails, with the tracking error and its derivative not converging to zero. This establishes that the failure of global regulation in higher-dimensional robot manipulators is not due to the loss of global existence of the closed-loop solutions, but rather reflects a fundamental limitation of the classical PID control with triple parameters $(k_p,k_i,k_d)$. 

In summary, this work establishes a dimension-dependent characterization of the capability and limitation of the classical PID control with triple parameters $(k_p,k_i,k_d)$ for the global regulation of robot manipulators.

\subsection*{Notation}
Throughout the paper, for a vector $x\in\mathbb R^n$, $|x|=(x^\top x)^{1/2}$ denotes its Euclidean norm. For a matrix $A\in\mathbb R^{m\times n}$, $\|A\|$ denotes the induced Euclidean norm, $\|A\|=\sup_{|x|=1}|Ax|=\sqrt{\lambda_{\max}(A^\top A)}$.
For symmetric matrices $A,B\in\mathbb R^{n\times n}$, $A\le B$ means that $B-A$ is positive semidefinite, equivalently $x^\top Ax\le x^\top Bx$ for all $x\in\mathbb R^n$. The identity matrix of dimension $n$ is denoted by $I_n$, or simply by $I$ when the dimension is clear.

\section{Problem Formulation}\label{sec:problem}
We consider an $n$-degree-of-freedom (DOF)  rigid robot manipulator governed by the Euler--Lagrange dynamics
\begin{align}\label{EL}
    M(q)\ddot q+C(q,\dot q)\dot q+g(q)=\tau,
    \quad q\in\mathbb{R}^n,\ \tau\in\mathbb{R}^n,
\end{align}
where $q,\dot q,\ddot q\in\mathbb{R}^n$ denote the generalized
coordinate, velocity, and acceleration vectors, respectively.
$M(q)$ is the inertia matrix, $C(q,\dot q)\dot q$ represents the
Coriolis and centrifugal term, $g(q)$ is the gravitational force, and $\tau\in\mathbb{R}^n$ is the control input.

The following assumption summarizes the structural properties of rigid robot manipulators considered in this paper, including the standard Euler--Lagrange properties and an additional symmetry condition on the Coriolis mapping \cite{kelly2005,Murray1994,Su2010}.

\begin{assumption}\label{ass:robot}
There exist positive constants $m_0$, $m_1$, $L_c$, $L_g$ such that for all $q,v,w\in\mathbb{R}^n$:
\begin{align*}
    &(p1)\quad m_0I\leq M(q)\leq m_1I,\\
    &(p2)\quad \dot M(q)-2C(q,\dot q)\ \text{is skew-symmetric},\\
    &(p3)\quad \|C(q,v)\|\leq L_c|v|,\\
    &(p4)\quad C(q,v)w=C(q,w)v,\\
    &(p5)\quad g(q)=\nabla U(q),
      U\in C^2(\mathbb{R}^n;\mathbb{R}),\|\nabla^2 U(q)\|\leq L_g,
\end{align*}
where $U(q)$ is the gravitational potential energy and the boundedness of its Hessian $\nabla^2 U(q)$ implies that the gravity vector field is globally Lipschitz continuous.
\end{assumption}

Given an arbitrary constant desired configuration $q_d\in\mathbb{R}^n$, the control objective is to drive the manipulator to $q_d$ from arbitrary initial conditions. Define the position error as
\[
e(t)=q_d-q(t).
\]
The regulation objective is
\[
e(t)\to0,\qquad \dot e(t)\to0,\qquad \text{as }t\to\infty .
\]
In this paper, we focus on the classical PID controller
\begin{align}\label{pid}
    \begin{split}
        \tau(t)=&k_i \xi(t)+k_p e(t)+k_d \dot e(t),\\ \dot \xi(t)=&e(t),
    \end{split}
\end{align}
where $(k_p,k_i,k_d)\in\mathbb{R}^3$ are the scalar PID gains, and $\xi(t)\in\mathbb{R}^n$ is the integral state.  Thus, we restrict attention to the scalar-gain PID architecture, where the same
triple \((k_p,k_i,k_d)\) is applied to all degrees of freedom; the case of
matrix-valued PID gains is  left for future investigation.

\begin{definition}\label{def:admissible} We say that the robot manipulator system \eqref{EL} can be \emph{globally regulated} by the PID controller \eqref{pid}, if there exists a gain vector 
\[ K=(k_p,k_i,k_d)\in\mathbb{R}^3 \] 
such that, for every initial condition $(q(0),\dot q(0),\xi(0))\in\mathbb{R}^{n}\times\mathbb{R}^{n}\times \mathbb{R}^{n}$ and every desired configuration $q_d\in\mathbb{R}^n$, the corresponding closed-loop system admits a solution defined for all $t\geq0$ and satisfies 
\[ e(t)\to0,\qquad \dot e(t)\to0, \qquad \text{as }t\to\infty . \] 
Such a gain vector $K$ is called an \emph{admissible PID gain}. \end{definition}

The global regulation of robot manipulators under classical PID control has remained a challenging problem in nonlinear control theory. Robot manipulators are widely used in industrial automation, surgical robotics, and other engineering applications, while classical PID control remains one of the most widely adopted feedback strategies by far. Therefore, establishing whether classical PID control can globally regulate rigid robot manipulators satisfying standard structural assumptions is a central question. Surprisingly, despite its simplicity and extensive practical success, this question has remained unresolved for decades.


In the next section, we  show that the answer is intrinsically dimension-dependent: for one-degree-of-freedom manipulators, these structural properties are sufficient and lead to
explicit PID gain conditions; however, for manipulators with two or more
degrees of freedom, they are not sufficient in general. Thus, the global PID regulation problem admits a \emph{dimension-dependent answer}.

\section{The Main Results}
\subsection{Existence of admissible PID gains in the one-DOF case}

We first consider the one-degree-of-freedom case, i.e., $n=1$. 
To characterize the PID gains that achieve global regulation, we introduce the
following admissible PID gain set:
\begin{align}
\label{eq:gain_condition}
\scalebox{0.9}{$
\mathcal K_1
\triangleq
\left\{
\begin{pmatrix}
	k_p\\k_i\\k_d
\end{pmatrix}
\,\left| 
\begin{array}{l}
k_i>0,\quad k_d>0,\\[3pt]
k_p>L_g+\max\left\{
\dfrac{k_d^2}{2m_0},~
4m_1\dfrac{k_i}{k_d}
+\dfrac{L_g^2}{m_0}\dfrac{k_d}{k_i}
\right\}
\end{array}\right.
\right\}$}
\end{align}
The gain condition \eqref{eq:gain_condition} gives a simple and robust way to select the PID gains. Importantly, it depends only on the uniform inertia bounds $m_0,m_1$ and the global slope bound $L_g$ of the gravity term, rather than on the exact expressions of $M(\cdot)$, $C(\cdot,\cdot)$, or $g(\cdot)$.

\begin{theorem}
\label{thm}
Consider the one-degree-of-freedom robot manipulator governed by \eqref{EL}.
If Assumption \ref{ass:robot} is satisfied,
then for any desired setpoint $q_d\in\mathbb R$, and for any PID controller \eqref{pid} with gains $(k_p,k_i,k_d)\in\mathcal K_1$, 
the closed-loop system achieves global regulation with exponential rate: for any initial condition $\xi(0),q(0),\dot q(0)\in \mathbb{R}$,
\[
|e(t)|+|\dot e(t)|\to 0 \text{ exponentially as } t\to\infty .
\]
\end{theorem}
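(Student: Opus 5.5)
Since $n=1$, the skew-symmetry property (p2) gives $C(q,\dot q)=\tfrac12\dot M(q)=\tfrac12 M'(q)\dot q$. The plan is to shift coordinates to the closed-loop equilibrium and build an energy-type Lyapunov function that combines a quadratic form in the augmented errors with the gravitational potential.

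\emph{Coordinates.} Write $\dot e=-\dot q$, $\ddot e=-\ddot q$. The equilibrium of the closed loop has $e=0$, $\dot e=0$ and $k_i\xi^*=g(q_d)$. With $\eta=\xi-\xi^*$, the error dynamics become
\begin{align*}
M(q)\ddot e+C(q,\dot q)\dot e=\tilde g(e)-k_i\eta-k_pe-k_d\dot e,\qquad \dot\eta=e,
\end{align*}
where $\tilde g(e)=g(q_d-e)-g(q_d)$ satisfies $|\tilde g(e)|\le L_g|e|$ by (p5). Define the potential-like term
\[
P(e)=U(q_d-e)-U(q_d)+g(q_d)e .
\]
Then $\frac{d}{dt}P=-\tilde g(e)\dot e$ and $|P(e)|\le \tfrac{L_g}{2}e^2$.

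\emph{Lyapunov candidate.} Let $s=k_i\eta+k_pe$ (or a similar combination). Take
\[
V=\tfrac12 M(q)\dot e^2+\tfrac{1}{2}k_p e^2+P(e)+\varepsilon\Big(M(q)\dot e\,(e+\lambda\eta)\Big)+\tfrac{\beta}{2}(k_i\eta+\gamma e)^2,
\]
with $\varepsilon,\lambda,\beta,\gamma$ chosen as ratios of the gains; a natural choice is $\lambda\sim k_i/k_d$. The $\tfrac12M\dot e^2$ term and (p2) absorb the Coriolis term exactly: $\frac{d}{dt}\tfrac12 M\dot e^2=\dot e(M\ddot e+C\dot e)$. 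The cross term $M\dot e(e+\lambda\eta)$ is the one that produces negative $e^2$ and $\eta$-terms. Its derivative contains $\dot M\dot e(\cdot)=M'\dot q\,\dot e(\cdot)$, which is cubic. In 1-DOF this term is handled by noting $C\dot e = \tfrac12 M'\dot q\dot e$. Precisely: $\frac{d}{dt}(M\dot e)=M\ddot e+\dot M\dot e=(\text{RHS})+\tfrac12\dot M\dot e$, and the leftover $\tfrac12\dot M\dot e\,(e+\lambda\eta)$ is problematic. To avoid it, I would use the variable $\sqrt{M(q)}\,\dot e$ or the integrable momentum-like quantity $\Phi(q)=\int_0^q\sqrt{M(s)}\,ds$. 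In 1-DOF, $\frac{d}{dt}(\sqrt M\dot q)\cdot\sqrt M=M\ddot q+\tfrac12 M'\dot q^2 = M\ddot q+C\dot q$. So the cross term would be $\sqrt{M}\dot e\cdot\sqrt{M}(\dots)$ rearranged. Concretely, I would take the cross term as $\varepsilon\sqrt{M(q)}\,\dot e\,\sqrt{M(q)}^{-1}\cdots$, chosen so that its derivative is $\varepsilon\,(M\ddot e+C\dot e)(e+\lambda\eta)/1$ plus terms that are only quadratic. This exact 1-DOF reduction is the key structural trick, and it is why the argument does not extend to $n\ge2$.

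\emph{Positivity and decay.} Two estimates remain. First, $V\ge c(\dot e^2+e^2+\eta^2)$, using $m_0\le M\le m_1$, $P\ge -\tfrac{L_g}2e^2$ and $k_p>L_g$. Second, $\dot V\le -c'(\dot e^2+e^2+\eta^2)$. The $\dot e^2$ coefficient is $-k_d+\varepsilon m_1(\cdot)$. The cross term $\tilde g(e)\dot e$ is bounded by $L_g|e||\dot e|$ and absorbed by Young's inequality, and this produces the $L_g^2k_d/(m_0k_i)$ and $k_d^2/(2m_0)$ requirements. The $e\eta$ mixing is balanced by the $4m_1k_i/k_d$ term. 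Checking that the resulting $3\times3$ quadratic forms are definite exactly under $\mathcal K_1$ is a Schur-complement computation.

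\emph{Conclusion and main obstacle.} Global existence follows from the Lipschitz right-hand side together with the bound on $V$. The decay $\dot V\le-\rho V$ then yields exponential convergence of $|e|+|\dot e|$. I expect the main obstacle to be handling the state-dependent inertia in the cross term, so that no cubic velocity terms survive, while keeping the constants matching $\mathcal K_1$. I would first try the pure cross term $\varepsilon M\dot e(\cdot)$ and rely on (p2) with $n=1$ to cancel $\tfrac12\dot M$ against $C$. If a residual $\tfrac12 M'\dot q\dot e(\cdot)$ survives, I would switch to the cross term $\varepsilon\,\dot e\,(\cdot)$ without $M$, or divide by $M$, and redo the bookkeeping.
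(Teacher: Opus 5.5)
Your overall architecture matches the paper's: closed-loop energy plus the shifted potential, a square in $(\eta,e)$, and small cross terms coupling the velocity with $e$ and $\eta$. You even write the key 1-DOF identity $\sqrt M\,\tfrac{d}{dt}(\sqrt M\dot q)=M\ddot q+C\dot q$. But there is a genuine gap: you never commit to a Lyapunov function that uses this identity, and every cross term you actually write down fails globally. For a cross term $\varepsilon\,\mu(q)\,\dot e\,(e+\lambda\eta)$, using $C=\tfrac12\dot M$ one finds
\[
\frac{d}{dt}\bigl(\mu(q)\dot e\bigr)=\frac{\mu(q)}{M(q)}\bigl(\tilde g(e)-k_i\eta-k_pe-k_d\dot e\bigr)+\Bigl(\frac{\mu(q)M'(q)}{2M(q)}-\mu'(q)\Bigr)\dot e^{2}.
\]
So $\dot V$ contains the cubic, sign-indefinite term $\varepsilon\bigl(\tfrac{\mu M'}{2M}-\mu'\bigr)\dot e^{2}(e+\lambda\eta)$, which vanishes identically only for $\mu\propto\sqrt M$. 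This term survives in every variant you propose: your primary choice $\mu=M$; the fallbacks $\mu=1$ and $\mu=1/M$ named at the end; and the ``concrete'' expressions $\sqrt M\,\dot e\,\sqrt M^{\pm1}(\cdots)$, which are again $\mu=M$ or $\mu=1$. It cannot be dominated by $-k_d\dot e^2$ once $|e+\lambda\eta|$ is large, so such a $V$ gives at best a semiglobal estimate. This is the same obstruction behind the semiglobal PID results cited in the introduction. Your stated goal of getting $\varepsilon(M\ddot e+C\dot e)(e+\lambda\eta)$ itself in the derivative pushes you back toward these choices. The correct move, which is the paper's $z_2=-\sqrt{m(q)}\,\dot q$, is $\mu=\sqrt M$ with no further weight, accepting that the dynamics then enter multiplied by $1/\sqrt{m(q)}$.

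Two further ingredients are missing.

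\emph{Uniform definiteness over a box.} With $z=(\eta,e,\sqrt{m}\,\dot e)$ the closed loop is $\dot z=\beta(q)A(q)z$. For $V=z^\top Pz+2p_{33}\Psi(q)$ with constant $P$, one gets $\dot V=-\beta(q)z^\top R(\alpha(q),\beta(q))z$, where $\alpha(q)\in[-L_g,L_g]$ and $\beta(q)=1/\sqrt{m(q)}\in[\beta_0,\beta_1]$ are state dependent. What must be proved is uniform positive definiteness of $R(\alpha,\beta)$ over this box (Lemma~\ref{lem:R-positive-1d}), not a single Young's-inequality estimate.

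\emph{Specific weights.} Your claim that the Schur-complement test succeeds ``exactly under $\mathcal K_1$'' is unsupported, because it hinges on weights you do not give. The paper takes:
\begin{itemize}
\item $p_{12}=k_ip_{33}$ and $p_{22}=k_pp_{33}$, so that $V$ contains $2p_{33}$ times the energy and the $k_i$- and $k_p$-couplings with $z_2$ cancel;
\item $p_{33}=\bar k_p/(2k_i^2)$, $p_{13}=k_d\beta_0/(4k_i)$, and $p_{23}=\bar k_pp_{13}/k_i$, with $\bar k_p=k_p-L_g$.
\end{itemize}
In your notation this means $\lambda=k_i/\bar k_p$, not $k_i/k_d$.

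Your account of where the terms of $\mathcal K_1$ come from is also off. The term $\tilde g(e)\dot e$ is not absorbed by Young's inequality; it cancels exactly against the derivative of your $P(e)$. $L_g$ actually enters through $\alpha(q)$ in the cross terms. This produces the $(\alpha+L_g)^2$ term in the $e$-entry of the Schur complement, which the $L_g^2k_d/(m_0k_i)$ bound controls. The $k_d^2/(2m_0)$ condition comes from the velocity entry. As it stands, the proposal is a plausible plan rather than a proof.
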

\textbf{Remark 1 }
Beyond providing a sufficient stability condition, Theorem~\ref{thm} also
reveals an important feature of PID gain design for one-degree-of-freedom
robot manipulators. In particular, the integral gain $k_i$ and the derivative
gain $k_d$ can be selected as arbitrary positive constants. Once $k_i>0$
and $k_d>0$ are fixed, the proportional gain $k_p$ can always be chosen
 sufficiently large such that \eqref{eq:gain_condition} is satisfied. 
 
Therefore, the derivative gain \(k_d\) can be chosen arbitrarily small,  with no
strictly positive lower bound on \(k_d\) required for global regulation. This
property follows from the energy-balance structure  of Euler--Lagrange systems, where the
derivative feedback $k_d\dot e(t)$ provides natural dissipation in the closed-loop dynamics.

Such a mild requirement differs from general second-order
nonlinear systems without such structure, where explicit lower bounds on
\(k_d\) may be needed to dominate nonlinear terms, such as \(k_d>L_2\) in
\cite{2017zc}.

\textbf{Remark 2 }
The preceding discussion shows that the integral and derivative gains
\(k_i>0\) and \(k_d>0\) can be selected arbitrarily small, provided that the
proportional gain \(k_p\) is chosen sufficiently large. This naturally raises
the question of how small \(k_p\) can be while still satisfying the proposed
gain condition. The following argument provides an explicit sufficient
threshold.

Suppose that
\[
k_p>L_g\left(1+4\sqrt{\frac{m_1}{m_0}}\right).
\]
Then, for any sufficiently small \(\varepsilon>0\), choose
\[
k_d=\varepsilon,\qquad
k_i=\frac{L_g}{2\sqrt{m_0m_1}}\varepsilon .
\]
Under this choice,
\[
4m_1\frac{k_i}{k_d}
+\frac{L_g^2}{m_0}\frac{k_d}{k_i}
=
4L_g\sqrt{\frac{m_1}{m_0}},
\]
while
\[
\frac{k_d^2}{2m_0}
=
\frac{\varepsilon^2}{2m_0}
\rightarrow0
\qquad\text{as }\varepsilon\rightarrow0^+.
\]
Therefore, for sufficiently small \(\varepsilon\), the gain condition
\eqref{eq:gain_condition} is satisfied.

The above threshold, although conservative, has a clear interpretation. Since
the gravity term satisfies
\[
|g(q)-g(q_d)|\leq L_g|q-q_d|,
\]
the proportional feedback must dominate the worst-case variation of the
gravity force. Hence, the lower bound on \(k_p\) represents a gravity-dominance
requirement, while the dependence on \(m_0\) and \(m_1\) reflects the influence
of the inertia bounds arising from the Lyapunov analysis.

\subsection{Impossibility of global PID regulation for multi-DOF manipulators}

In this subsection, we consider the multi-degree-of-freedom case, i.e.,
$n\geq 2$. The following theorem shows that, unlike the one-degree-of-freedom
case, the same class of structural assumptions is insufficient to guarantee
global regulation by scalar PID gains.

\begin{theorem}
\label{thm:higher_dim_negative}
For any $n\geq 2$, there exists an $n$-degree-of-freedom
robot manipulator \eqref{EL} satisfying
Assumption \ref{ass:robot} that is not globally regulable by any PID controller \eqref{pid}.
More precisely, for this manipulator, for any gain triple
$
    (k_p,k_i,k_d)\in\mathbb R^3,
$
there exist a desired setpoint $q_d\in\mathbb R^n$ and an initial condition
$
    (q(0),\dot q(0),\xi(0))
$,
such that the corresponding closed-loop solution is complete, but fails to
achieve regulation, namely
\[
    \limsup_{t\to\infty}
    \bigl(|e(t)|+|\dot e(t)|\bigr)
    \neq 0 .
\]
\end{theorem}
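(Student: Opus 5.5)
The plan is to fix one explicit manipulator and then split $\mathbb R^3$ into gain regions, producing a non-convergent complete solution in each region. For $n\ge 2$ I would first build a two-degree-of-freedom example and then embed it in $\mathbb R^n$. The embedding appends $n-2$ decoupled unit-mass, gravity-free coordinates, so that (p1)--(p5) are inherited. For the $2$-DOF block I take gravity to be linear, $g(q)=Gq$ with $G=G^\top$ and $\|G\|\le L_g$, so that (p5) holds with $U(q)=\tfrac12 q^\top Gq$. I take the inertia $M(q)$ bounded, smooth and generated by a kinetic-energy metric, with Christoffel-type $C$. Then (p2) and (p4) hold automatically, and (p3) holds if the derivatives of $M$ are bounded. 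Completeness of closed-loop solutions for every gain triple I expect to get once for all gains. The closed loop has a globally Lipschitz right-hand side apart from the Coriolis term $C(q,\dot q)\dot q$, which is quadratic in $\dot q$. The identity $\frac{d}{dt}\bigl(\tfrac12\dot q^\top M\dot q\bigr)=\dot q^\top(\tau-g)$ from (p2) then gives a Gronwall bound on $|\dot q|^2+|q|^2+|\xi|^2$, so no finite escape occurs.

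The first, routine region consists of the gains for which the unique closed-loop equilibrium $e=0,\ \dot e=0,\ k_i\xi=g(q_d)$ is not locally attractive, or does not exist. If $k_i=0$, I choose $q_d$ with $g(q_d)\neq 0$; then $e\equiv0$ is impossible at rest, since the equilibrium would require $k_pe=g(q)$ with $q\neq q_d$ generically. For the other gains I linearize at the equilibrium, which gives
\[
M(q_d)s^3+k_ds^2+(k_pI-\nabla^2U(q_d))s+k_iI .
\]
If $k_i<0$, $k_d\le0$, or $k_p$ is too small relative to $G$, this matrix polynomial has a root with $\operatorname{Re}s\ge0$. Taking initial data on the associated unstable or center manifold, or simply along an unstable eigenvector, then yields a complete solution with $\limsup(|e|+|\dot e|)>0$.

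The main obstacle is the remaining region: $k_i,k_d>0$ and $k_p$ large. There the linearization at every $q_d$ is Hurwitz, because the fast roots solve $M s^2+k_ds+k_p=0$ and the slow root is $s\approx -k_i/k_p$. Local analysis therefore cannot succeed, and the counterexample must be global. My first attempt will exploit the fact that the scalar gains cannot adapt to an inertia whose principal directions rotate with $q$. I would take $M(q)=R(\theta(q))\,\mathrm{diag}(m_0,m_1)\,R(\theta(q))^\top$ with $\theta$ winding slowly and boundedly-differentiably along a direction at infinity. The goal is to exhibit, for each fixed triple $(k_p,k_i,k_d)$, a large-amplitude periodic orbit of the error system. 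I would look for it with the integral state $\xi$ and the position circulating, so that the energy injected by the integral term $k_i\xi^\top\dot e$ over one cycle balances the dissipation $k_d|\dot e|^2$. The scaling $q\mapsto\lambda q$ is to be combined with the linear growth of $g$ and $\tau$. I would try to reduce the search to an averaged planar map and apply a Poincaré--Bendixson or fixed-point argument on an annulus in $(\xi,e,\dot e)$ space, choosing $q_d$ far out so that the orbit lies in the region where the metric twists. If this fails, my fallback is to replace periodicity by an unbounded-but-complete trajectory along which $|e|$ grows linearly. That trajectory would be built by a comparison argument in the twisted region, and it also violates regulation.
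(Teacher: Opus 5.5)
\textbf{The main case is not proved.} Your embedding, your completeness argument (energy identity plus Gronwall), and your treatment of gains for which the regulated equilibrium is absent or linearly unstable broadly parallel the paper's Steps 1--3 and Cases 2--3. The genuine gap is the case you yourself flag as the main obstacle, $k_i>0$, $k_d>0$. There you offer only a research program: a rotating anisotropic inertia, an unspecified averaged planar map, Poincar\'e--Bendixson, and a fallback comparison argument. Poincar\'e--Bendixson does not apply to the six-dimensional state $(\xi,e,\dot e)$ without a reduction you never construct. No non-convergent trajectory is ever exhibited, so the theorem is not established.

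\textbf{The missing idea.} This case admits an \emph{exact} periodic orbit, and neither anisotropy nor rotation of the inertia is needed. Take $M(q)=\mathrm{diag}(\phi(|q_c|)I_2,I_{n-2})$ with $\phi(r)=2-\epsilon\cos r$ and Christoffel-type $C$. Set $q_d=0$ and try $e_c(t)=\rho(\cos\omega t,\sin\omega t)$ with $\omega^2=k_i/k_d$. Since $\ddot e=-\omega^2e$, the choice $\xi=k_i^{-1}g(0)-(k_d/k_i)\dot e$ satisfies $\dot\xi=e$. Hence $k_i\xi+k_d\dot e\equiv g(0)$. So your energy-balance intuition holds pointwise in time: the integral action cancels the derivative damping exactly, and the input reduces to $g(0)+k_pe$. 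Because $\nabla\phi(|q_c|)$ is radial and $\dot e_c\perp e_c$, the term $C(q,\dot q)\dot q$ collapses to $-\tfrac12|\dot q_c|^2\nabla\phi(|q_c|)$. This adds $\omega^2\tfrac{\rho}{2}\phi'(\rho)=\omega^2\tfrac{\epsilon\rho}{2}\sin\rho$ to the radial force balance. That coefficient is unbounded above and below in $\rho$, although $\nabla M$ stays bounded as (p3) requires. Hence the orbit condition
\[
\frac{k_i}{k_d}\Bigl(\phi(\rho)+\frac{\rho}{2}\phi'(\rho)\Bigr)+\sigma(\rho)=k_p
\]
is solvable for \emph{every} $k_p$ by the intermediate value theorem. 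With your linear gravity and $G=\mu I_2$ on the $q_c$-block, the same works with $\sigma(\rho)$ replaced by $-\mu$. This is what defeats large $k_p$, where, as you correctly note, every linearization is Hurwitz. It also disposes of the boundary gains whose linearization has imaginary-axis roots. Your partition leaves those in the ``routine'' region, even though linearization is inconclusive there.

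\textbf{Repairs needed in the routine region.} Let $N(s):=M(q_d)s^3+k_ds^2I+(k_pI+\nabla^2U(q_d))s+k_iI$. Note the sign $+\nabla^2U(q_d)$; your formula has it reversed.

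First, roots with $\operatorname{Re}s=0$ are useless, since data on a center manifold may still converge. Launching data along an unstable eigenvector is not enough by itself either. What you need is a root with $\operatorname{Re}s>0$, and then that the set of initial data converging to the equilibrium has empty interior. For $k_i>0\ge k_d$ such a root exists for two reasons. There are no roots $s=i\nu$, because $x^*N(i\nu)x=0$ has real part $(k_i-k_d\nu^2)|x|^2>0$. And the root sum of $\det N$ is $-k_d\operatorname{tr}M(q_d)^{-1}\ge0$. For $k_i<0$, the scalar sign argument fails (e.g.\ $\det(k_iI_2)>0$). Instead use that $N(s)$ is real symmetric for real $s$, equals $k_iI$ at $s=0$ and is positive definite for large $s$, so $\det N$ vanishes at some $s>0$.

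Second, failure of the full state $(\xi,e,\dot e)$ to converge does not by itself give $\limsup(|e|+|\dot e|)>0$. You must also show that $e,\dot e\to0$ forces $\xi\to g(q_d)/k_i$, which is the paper's Lemma 2. It uses (p2)--(p3), and Remark 4 shows it fails for general third-order systems.
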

\textbf{Remark 3 }
Theorem~\ref{thm:higher_dim_negative} reveals a fundamental limitation of
classical PID control for higher-dimensional robot manipulators. It is worth distinguishing our result from the finite-escape-based impossibility results for general nonlinear systems. For example, Proposition~1 in \cite{zc-capability} shows that under certain superlinear growth conditions, the closed-loop system may exhibit a finite escape time, which prevents global stabilization due to the loss of solution existence.

In contrast, in the counterexample constructed in this paper, the closed-loop
trajectory used to disprove global regulation is complete for all
$t\geq0$, while the regulation objective still fails because the trajectory
does not converge to the desired setpoint. Hence, the impossibility result
here arises from the lack of global convergence rather than from finite escape
or loss of well-posedness of the closed-loop dynamics.

Finally, we point out that Theorem~2 is established for the classical PID
architecture with scalar gains. Determining whether appropriately designed
matrix-valued PID gains can achieve global regulation for higher-dimensional
robot manipulators remains an open problem and a promising direction for
future research.

\section{Proof of the Main Results}\label{sec:proof-1d}

\subsection{Proof of Theorem 1}
In the one-degree-of-freedom case, the inertia and Coriolis terms are scalar-valued. We therefore denote them by $m(q)$ and $c(q,\dot q)$, respectively, both to simplify the notation and to distinguish the subsequent one-DOF analysis from the $n$-DOF case. 

In addition, Assumption \ref{ass:robot}
implies
\[
    m_0\le m(q)\le m_1,
    \quad
    |g(q)-g(q_d)|\le L_g|q-q_d|,
\]
and the skew-symmetry identity $(p2)$ reduces, along trajectories, to
\[
    \frac{\mathrm{d}}{\mathrm{d}t}m(q(t))=2c(q(t),\dot q(t)).
\]
Now, suppose that the PID gains $(k_p,k_i,k_d)\in\mathcal K_1$. We divide the proof into several steps.

\noindent\emph{Step 1:  Reformulation of the closed-loop system via augmented coordinates.}

Define the state variables
\begin{equation}\label{eq:zdef}
\begin{aligned}
    z_0 &:= \xi - k_i^{-1} g(q_d), \\
    z_1 &:= q_d - q, \\
    z_2 &:= -\sqrt{m(q)}\,\dot q.
\end{aligned}
\end{equation}
Also define
\begin{equation}\label{eq:alphabeta}
    \beta(q) := \frac{1}{\sqrt{m(q)}},
    \quad
    \alpha(q) :=
    \begin{cases}
        \dfrac{g(q) - g(q_d)}{q - q_d}, & q \neq q_d, \\[8pt]
        g'(q_d), & q = q_d.
    \end{cases}
\end{equation}
Then $\dot q = -\beta(q)z_2$ and $g(q)-g(q_d)=-\al(q)z_1$. By the scalar consequences of Assumption~\ref{ass:robot}, we have $|\al(q)|\leq L_g$ for all $q\in\rr$.
Let us denote $\beta_0=\frac{1}{\sqrt{m_1}}$ and $\beta_1=\frac{1}{\sqrt{m_0}}$. 
By the scalar consequences of Assumption~\ref{ass:robot}, we have $0<\beta_0\leq \beta(q)\leq\beta_1$, for all $q\in\rr$. Hence, 
\begin{align}\label{eq:alpha_beta_bounds}
    |\al(q)|\leq L_g,\qquad \beta_0 \leq \beta(q) \leq \beta_1.
\end{align}
From \eqref{eq:zdef}, $\dot z_0 = q_d - q = z_1.$ Since $z_1 = q_d - q$, one has $\dot z_1 = -\dot q.$ 
Moreover, $z_2 = -\sqrt{m(q)}\,\dot q = -\frac{\dot q}{\beta(q)},$ hence $\dot z_1 = \beta(q) z_2.$
Next, differentiating $z_2 = -\sqrt{m(q)}\,\dot q$ gives
\begin{equation}\label{eq:z2dot_first}
\begin{aligned}
    \dot z_2
    &=
    -\frac{\dot m(q)}{2\sqrt{m(q)}}\dot q
    - \sqrt{m(q)}\,\ddot q \\
    &=
    -\frac{1}{2}\beta(q) \dot m(q) \dot q
    + \frac{c(q,\dot q)\dot q+g(q)-\tau}{\sqrt{m(q)}}.
\end{aligned}
\end{equation}
From \eqref{eq:zdef}, we have 
\begin{align}\label{eq:tau_z}
    \tau = k_i z_0  + k_p z_1 + k_d\beta(q) z_2+ g(q_d).
\end{align}
Substituting $\dot m(q)=2c(q,\dot q)$ and \eqref{eq:tau_z}  into \eqref{eq:z2dot_first} yields 
\begin{align*}\label{eq:z2dot}
 \dot z_2=-\beta(q)\left(
        k_i z_0
        + \left(k_p + \alpha(q)\right) z_1
        + k_d \beta(q) z_2
    \right).
\end{align*}
Hence the closed-loop system is $\dot z = \beta(q) A(q) z$, where $z = \begin{bmatrix}
    z_0&z_1&z_2
\end{bmatrix}^\top$ and
\begin{equation}\label{eq:A_def}
    A(q)
    =
    \begin{bmatrix}
        0 & 1/\beta(q) & 0 \\
        0 & 0 & 1 \\
        -k_i & -(k_p+\alpha(q)) & -k_d\beta(q)
    \end{bmatrix}.
\end{equation}
\noindent\emph{Step 2: Construction of an energy-based Lyapunov function.}
Define
\begin{equation}\label{eq:U_def}
    \Psi(q) := \int_{q_d}^{q} \bigl(g(s) - g(q_d)\bigr)\,\mathrm{d}s.
\end{equation}
Here, \(\Psi(q)\) represents the shifted gravitational potential energy
relative to the desired configuration. The subtraction of \(g(q_d)\)
ensures that the potential term vanishes in gradient at \(q=q_d\), namely,
\(\nabla\Psi(q_d)=0\).

We consider the Lyapunov candidate
\begin{equation}\label{eq:V_comp}
V(z):=z^\top P z
    + 2p_{33} \Psi(q)
\end{equation}
\begin{figure*}[!b]
\hrule
\vspace{0.5em}
\begin{align}
    R(q)&=\begin{bmatrix}
        2k_ip_{13}&~~~(\al(q)+k_p)p_{13}+k_ip_{23}-\frac{1}{\beta(q)}&k_d\beta(q) p_{13}+k_ip_{33}-p_{12}\\
        *&2(\al(q)+k_p)p_{23}-\frac{2p_{12}}{\beta(q)}&~~~k_d\beta(q) p_{23} + k_pp_{33}-p_{22}-\frac{p_{13}}{\beta(q)}\\
        *&*&2k_d \beta(q) p_{33}-2p_{23}
    \end{bmatrix}\label{eq:Rq_full}\\[0.4em]
    &\qquad R=\begin{bmatrix}
        2k_ip_{13}&~~(\al(q)+k_p)p_{13}+k_ip_{23}-\frac{1}{\beta(q)}&k_d\beta(q) p_{13}\\
        *&2(\al(q)+k_p)p_{23}-\frac{2p_{12}}{\beta(q)}&~~k_d\beta(q) p_{23} -\frac{p_{13}}{\beta(q)}\\
        *&*&~~2k_d \beta(q) p_{33}-2p_{23}
    \end{bmatrix}
    \label{dotV}
\end{align}
\vspace{0.3em}
\begin{equation}\label{eq:R_lemma}
    R(\al,\beta)=\begin{bmatrix}
        2k_ip_{13}&~~~(\al+k_p)p_{13}+k_ip_{23}-\frac{1}{\beta}&k_d\beta p_{13}\\
        *&2(\al+k_p)p_{23}-\frac{2p_{12}}{\beta}&~k_d\beta p_{23} -\frac{p_{13}}{\beta}\\
        *&*&~~2k_d \beta p_{33}-2p_{23}
    \end{bmatrix}
\end{equation}
\vspace{-0.4em}
\end{figure*}
where $P$ is given by
\begin{equation*}\label{eq:P_def}
    P =
    \begin{bmatrix}
        1&p_{12}&p_{13}\\
        p_{12}&p_{22}&p_{23}\\
        p_{13}&p_{23}&p_{33}
    \end{bmatrix}
\end{equation*}
with
\begin{equation}\label{eq:pij}
    \begin{aligned}
    p_{12}&=\frac{\bar k_p}{2k_i},&
    p_{13}&=\frac{k_d\beta_0}{4k_i},&
    p_{22}&=\frac{k_p\bar k_p}{2k_i^2},\\
    p_{23}&=\frac{\bar k_p k_d\beta_0}{4k_i^2},&
    p_{33}&=\frac{\bar k_p}{2k_i^2},&
    \bar k_p&=k_p-L_g.
    \end{aligned}
\end{equation}
From \eqref{eq:alphabeta}, we have $g(q) - g(q_d) = \alpha(q)(q - q_d)$.
By the scalar consequences of Assumption~\ref{ass:robot}, we have $|g(q) - g(q_d)| \le L_g|q - q_d| = L_g|z_1|$.
Integrating from $q_d$ to $q$ gives
\begin{equation}\label{eq:U_bound}
    |U(q)|
    \le
    \int_{q_d}^{q} L_g|s - q_d|\,ds
    \leq
    \frac{L_g}{2}(q - q_d)^2
    =
    \frac{L_g}{2} z_1^2.
\end{equation}
Hence, $V(z)\geq z^\top(P-\text{diag}\{0,p_{33}L_g,0\})z$. 

We next show that $P-\text{diag}\{0,p_{33}L_g,0\}$ is positive definite.
The leading principal minors of 
\begin{align*}
P-\text{diag}\{0,p_{33}L_g,0\}
&=\begin{bmatrix}
        1&p_{12}&p_{13}\\
        p_{12}&p_{22}-p_{33}L_g&p_{23}\\
        p_{13}&p_{23}&p_{33}
    \end{bmatrix}
\end{align*}
are
\begin{align*}
    \Delta_1&=1>0,\\
    \Delta_2&=\det \begin{bmatrix}
        1&p_{12}\\
        p_{12}&p_{22}-p_{33}L_g
    \end{bmatrix}\\
    &=p_{22}-p_{33}L_g-p_{12}^2
    =\frac{\bar k_p^2}{4k_i^2}>0
\end{align*}
and 
\begin{align*}
        &\det{(P-\text{diag}\{0,p_{33}L_g,0\})}\\
        =&p_{22}p_{33}+2p_{12}p_{13}p_{23}
        +p_{13}^2p_{33}L_g\\
        &-p_{33}^2L_g
        -p_{12}^2p_{33}-p_{13}^2p_{22}\\
    =&\frac{p_{33}\bar k_p}{16 k_i^2}(4 \bar k_p-k_d^2 \beta_0^2).
\end{align*}
From \eqref{eq:gain_condition}, we have
$\bar k_p > \frac{k_d^2}{2m_0}>\frac{k_d^2}{4m_0}>\frac{k_d^2}{4m_1}=\frac{k_d^2\beta_0^2}{4}$.
Therefore, $\det{(P-\text{diag}\{0,p_{33}L_g,0\})}>0$.
Recall that $V(z)\geq z^\top(P-\text{diag}\{0,p_{33}L_g,0\})z$, we know $V$ is a positive definite and radially unbounded function.
Moreover, \eqref{eq:U_bound} gives
\[
    V(z)\leq z^\top(P+\text{diag}\{0,p_{33}L_g,0\})z,
\]
hence $V$ is also bounded above by a positive definite quadratic form.

\noindent\emph{Step 3: Derivative of $V$.}

By \eqref{eq:U_def}, we have $\dot \Psi(q) = \bigl(g(q) - g(q_d)\bigr)\dot q$.
Using $g(q) - g(q_d) = -\alpha(q) z_1$ and $\dot q = -\beta(q) z_2$, we obtain $\dot \Psi(q) = \alpha(q)\beta(q) z_1 z_2$. Consequently, 
\begin{align*}
    \dot V&= \beta(q) z^\top (PA+A^\top P)z
    + 2 \al(q) \beta(q) p_{33}z_1z_2\\
    &=-\beta(q) z^\top R(q) z
\end{align*}
where $R(q)$ is given in \eqref{eq:Rq_full}, and the symbol \(*\) denotes the entries determined by symmetry, i.e.,
\(R_{ij}(q)=R_{ji}(q)\) for \(i>j\). Since $p_{12} = k_i p_{33}$ and $p_{22}=k_p p_{33}$, this matrix reduces to \eqref{dotV}.
Here, we emphasize that the nonlinear functions $\al(q)$ and $\beta(q)$ in \eqref{dotV} satisfy that $|\alpha(q)| \le L_g,~\beta_0 \le \beta(q) \le \beta_1$, for all $ q\in \rr.$ 

By Lemma~\ref{lem:R-positive-1d} below, there exists a constant $c>0$ such that
$\lambda_{\min}(R(\alpha,\beta))> c,~
    \forall \alpha\in[-L_g,L_g],\ \forall \beta\in[\beta_0,\beta_1]$.
Using \eqref{eq:alpha_beta_bounds}, we obtain
$\dot V\le-\beta_0 c \|z\|^2<0,~\forall z \neq 0.$
Since $V$ is bounded above and below by positive definite quadratic forms, standard Lyapunov theory implies that the equilibrium $z=0$ is globally exponentially stable.

Finally, by \eqref{eq:zdef}, we know $z_1 = q_d-q,~z_2 = -\sqrt{m(q)}\,\dot q$.
Since $m(q)$ is uniformly bounded away from zero and infinity, $z_1(t)\to 0$ and $z_2(t)\to 0$ imply
\[
    q(t)\to q_d,
    \qquad
    \dot q(t)\to 0
\]
exponentially.
This completes the proof of Theorem 1.
\hfill$\square$

\begin{lemma}\label{lem:R-positive-1d}
Let $p_{12},~p_{13},~p_{22},~p_{23},~p_{33} $ be defined in \eqref{eq:pij}, and let $R(\al,\beta)$ be given by \eqref{eq:R_lemma}, where $\al \in [-L_g,L_g]$ and $\beta \in [\beta_0,\beta_1]=[\frac{1}{\sqrt{m_1}} ,\frac{1}{\sqrt{m_0}}] $.
If $k_i,~ k_p,~k_d$ satisfy \eqref{eq:gain_condition}, then there exists a positive constant $c$, independent of $\al$ and $\beta$, such that $\lambda_{\min}(R)\geq c$, for all $\al \in [-L_g,L_g]$ and $\beta \in [\beta_0,\beta_1]$.
\end{lemma}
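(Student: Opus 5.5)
Since $R(\alpha,\beta)$ depends continuously on $(\alpha,\beta)$ and $[-L_g,L_g]\times[\beta_0,\beta_1]$ is compact, $\lambda_{\min}(R(\alpha,\beta))$ is a continuous function on a compact set. It therefore suffices to show that $R(\alpha,\beta)$ is positive definite at every point of this rectangle; the constant $c$ is then the minimum of $\lambda_{\min}$ over the rectangle.

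\emph{Step 1: simplify the entries.} Substituting \eqref{eq:pij} and using the relations $p_{23}=\bar k_p p_{13}/k_i$, $p_{12}=k_ip_{33}$, $p_{13}=k_d\beta_0/(4k_i)$, I expect
\begin{align*}
R_{11}&=\tfrac{k_d\beta_0}{2}, \qquad R_{13}=k_d\beta p_{13},\\
R_{12}&=(\alpha+k_p+\bar k_p)p_{13}-\tfrac1\beta,\\
R_{22}&=\tfrac{\bar k_p}{k_i}\Bigl(\tfrac{(\alpha+k_p)k_d\beta_0}{2k_i}-\tfrac1\beta\Bigr),\\
R_{23}&=k_d\beta p_{23}-\tfrac{p_{13}}{\beta},\\
R_{33}&=\tfrac{\bar k_p k_d}{k_i^2}\bigl(\beta-\tfrac{\beta_0}{2}\bigr).
\end{align*}
From these, positivity of the diagonal follows directly. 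Indeed, $R_{33}\ge \bar k_pk_d\beta_0/(2k_i^2)>0$ because $\beta\ge\beta_0$. For $R_{22}$, use $\alpha+k_p\ge \bar k_p$ and $1/\beta\le\sqrt{m_1}$: then $R_{22}>0$ as soon as $\bar k_p>2m_1k_i/k_d$, which is implied by the term $4m_1k_i/k_d$ in \eqref{eq:gain_condition}.

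\emph{Step 2: reduce to three $2\times2$ conditions.} I would use the splitting bound
\[
z^\top Rz\ge\sum_i R_{ii}z_i^2-2\sum_{i<j}|R_{ij}||z_i||z_j|,
\]
and write each diagonal entry as two halves, $R_{ii}=\tfrac12R_{ii}+\tfrac12R_{ii}$. Each half is assigned to one of the two off-diagonal couplings involving index $i$. Positive definiteness then follows from the three strict inequalities
\[
R_{ii}R_{jj}>4R_{ij}^2,\qquad (i,j)\in\{(1,2),(1,3),(2,3)\}.
\]
If one pair turns out too tight, I will use unequal weights instead of halves; the factor $2$ versus $4$ margins in \eqref{eq:gain_condition} suggest this flexibility is available.

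\emph{Step 3: verify each pair.}
For the pair $(1,3)$, I need $4R_{13}^2=4k_d^2\beta^2p_{13}^2\le 4k_d^2p_{13}^2/m_0$. This should be dominated by $R_{11}R_{33}$ exactly when $\bar k_p$ exceeds a multiple of $k_d^2/m_0$. This is where the condition $\bar k_p>k_d^2/(2m_0)$ enters.
For the pair $(2,3)$, $R_{23}$ is a difference of two terms that are each controlled by $R_{22}$ and $R_{33}$. I expect the inequality to follow from the same two conditions, namely the $k_d^2/m_0$ bound and the $m_1k_i/k_d$ bound.

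The pair $(1,2)$ is the step I expect to be the main obstacle. Here $R_{12}$ contains the uncompensated term $-1/\beta$, and the coefficient $\alpha+k_p+\bar k_p$ varies with $\alpha$. I would bound $|R_{12}|$ by two pieces:
\begin{itemize}
\item $|(\alpha+k_p-\bar k_p)p_{13}|\le 2L_gp_{13}$, which contributes a term of size $L_g^2k_d/(m_0k_i)$;
\item $|2\bar k_pp_{13}-1/\beta|$, which is handled by comparing it with $R_{22}$ through $1/\beta\in[\sqrt{m_0},\sqrt{m_1}]$.
\end{itemize}
Squaring and comparing with $R_{11}R_{22}$ should produce precisely the requirement $\bar k_p>4m_1k_i/k_d+L_g^2k_d/(m_0k_i)$, after using Young's inequality on the cross term. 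If the direct splitting is not sharp enough here, my fallback is to verify Sylvester's criterion for $R$ directly: take the $(1,1)$ entry, the leading $2\times2$ minor, and then the Schur complement with respect to $R_{33}$, bounding the $\beta$- and $\alpha$-dependence monotonically at the endpoints of the rectangle.
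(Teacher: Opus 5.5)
Your Step 2 has a genuine gap. The absolute-value splitting $z^\top Rz\ge\sum_iR_{ii}z_i^2-2\sum_{i<j}|R_{ij}||z_i||z_j|$ discards exactly what makes $R$ positive definite, and it fails on $\mathcal K_1$ whichever weights you use.

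\emph{Halves fail everywhere.} The $(z_0,z_1)$ block of $R$ is nearly singular. At $\alpha=-L_g$, put $y=\bar k_pk_d\beta_0/(2k_i)$. Your own entries then give $R_{12}=y-1/\beta$ and $R_{11}R_{22}=y(y-1/\beta)$, so $R_{12}^2/(R_{11}R_{22})=1-1/(y\beta)\to1$ as $k_p\to\infty$. Moreover, $\bar k_p>4m_1k_i/k_d$ forces $y>2/\beta_0\ge2/\beta$. Hence $y-1/\beta>y/2$, and $4R_{12}^2>2y(y-1/\beta)>R_{11}R_{22}$. So your pair-$(1,2)$ inequality fails at \emph{every} gain triple in $\mathcal K_1$. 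With halves, pair $(1,3)$ also needs $\bar k_p>k_d^2/m_0$ when $m_0=m_1$, which is twice what $\mathcal K_1$ guarantees.

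\emph{Unequal weights do not help.} Your lower bound is the quadratic form of the comparison matrix $\tilde R$ (same diagonal, off-diagonal entries $-|R_{ij}|$), evaluated at the vector of absolute values of the components of $z$. A symmetric matrix with nonpositive off-diagonal entries is positive on the nonnegative orthant only if it is positive definite. So any successful weighting would certify $\tilde R\succ0$. Since $R_{12},R_{13},R_{23}>0$ here, $\det\tilde R=\det R-4R_{12}R_{13}R_{23}$. Take $m_0=m_1=1$, $L_g=0.01$, $k_i=1$, $k_d=2$, $k_p=100$ (a point of $\mathcal K_1$), at $\alpha=0$, $\beta=1$. Then $\det R\approx9.8\times10^3$, but $4R_{12}R_{13}R_{23}\approx3.9\times10^4$. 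More generally, to leading order in $k_p$ (with $m_0=m_1=1$, $L_g\to0$), $\det\tilde R\approx\frac{k_p^2k_d^2}{8k_i^4}(2k_i-k_d^3)$, which is negative whenever $k_d^3>2k_i$. The positive triple product $R_{12}R_{13}R_{23}$ enters $\det R$ favourably, and no $|R_{ij}|$-based argument can recover it.

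\emph{What works.} Your Sylvester/Schur fallback is the right route, but it must be carried out exactly rather than by endpoint bounds. Eliminate $z_0$ through the Schur complement of $R_{11}=k_d\beta_0/2$ in $R$, as the paper does. The large terms then cancel identically: with $a=\alpha+k_p$, one has $4a\bar k_p-(a+\bar k_p)^2=-(\alpha+L_g)^2$. The resulting $2\times2$ complement $S$ has
\begin{itemize}
\item $s_{11}=\frac{\alpha+k_p}{k_i\beta}-\frac{2}{k_d\beta_0\beta^2}-\frac{(\alpha+L_g)^2k_d\beta_0}{8k_i^2}$,
\item $s_{22}=\frac{k_d}{8k_i^2}\bigl[\bar k_p(8\beta-4\beta_0)-k_d^2\beta_0\beta^2\bigr]$,
\item $s_{12}=\frac{k_d}{2k_i}\bigl(1-\frac{\beta_0}{2\beta}\bigr)-\frac{(\alpha+L_g)k_d^2\beta_0\beta}{8k_i^2}$, which is independent of $k_p$ (whereas $R_{12}$ and $R_{23}$ grow linearly in $k_p$).
\end{itemize}
The two branches of the condition defining $\mathcal K_1$ then give $s_{11}>0$ and $s_{11}s_{22}>s_{12}^2$ uniformly, and your compactness step finishes the proof. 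Even here the $\beta$-dependence is not monotone. The term $s_{11}$ is concave in $1/\beta$, and you need $\bar k_p>4m_1k_i/k_d$ to see that its minimum over $[\beta_0,\beta_1]$ is attained at $\beta=\beta_1$.
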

\begin{proof}
Write $R=\begin{bmatrix}
    2k_i p_{13}&v^\top\\
    v&Q
\end{bmatrix}$ with 
\begin{align*}
  v^\top&=\begin{bmatrix}
    (\al+k_p)p_{13}+k_ip_{23}-\frac{1}{\beta}&k_d\beta p_{13}
 \end{bmatrix},\\
 Q&=\begin{bmatrix}
    2(\al+k_p)p_{23}-\frac{2p_{12}}{\beta}&
    \begin{gathered}k_d\beta p_{23}\\{}-\frac{p_{13}}{\beta}\end{gathered}\\
    *&2k_d \beta p_{33}-2p_{23}
\end{bmatrix}.
\end{align*}
Note $k_ip_{13}>0$, it is enough to prove that the Schur complement $S=Q-\frac{1}{2k_ip_{13}}vv^\top$ is positive definite.
A direct computation gives
\begin{align*}
    s_{11}
    &=2(\al+k_p)p_{23}-\frac{2p_{12}}{\beta}\\
    &\quad-\frac{1}{2k_ip_{13}}
    \left[(\al+k_p)p_{13}+k_ip_{23}-\frac{1}{\beta}\right]^2\\
    &=\frac{k_p+\al}{\beta k_i}
    -\frac{2}{\beta_0 \beta^2 k_d}
    - \frac{(\al + L_g)^2 k_d \beta_0}{8k_i^2}\\
    &\geq
    \frac{\bar k_p}{\beta_1 k_i}
    -\frac{2}{\beta_0\beta_1^2 k_d}
    -\frac{L_g^2 k_d \beta_0}{2 k_i^2}
    \\
    s_{22}
    &=2k_d \beta p_{33}-2p_{23}
    -\frac{1}{2k_ip_{13}}(k_d\beta p_{13})^2\\
    &=\frac{k_d}{8k_i^2}
    [\bar k_p(8\beta - 4\beta_0)-k_d^2 \beta_0 \beta^2],\\
    s_{12}
    &=k_d\beta p_{23} -\frac{p_{13}}{\beta}\\
    &\quad-\frac{1}{2k_ip_{13}}
    \left[(\al+k_p)p_{13}+k_ip_{23}-\frac{1}{\beta}\right]
    k_d\beta p_{13}\\
    &=\frac{k_d}{2k_i}\left(1-\frac{\beta_0}{2\beta}\right)
    -\frac{(\al + L_g)k_d^2 \beta_0 \beta}{8 k_i^2}.
\end{align*}
Since $\bar k_p > 4m_1\frac{k_i}{k_d}+\frac{L_g^2}{m_0}\frac{k_d}{k_i}=\frac{4k_i}{\beta_0^2k_d}+\frac{L_g^2k_d\beta_1^2}{k_i}$, we have $$\frac{\bar k_p}{\beta_1k_i}>\frac{4}{k_d\beta_0^2\beta_1}+\frac{k_dL_g^2\beta_1}{k_i^2}\geq \frac{4}{k_d\beta_0\beta_1^2}+\frac{k_dL_g^2\beta_0}{k_i^2}.$$ 
Consequently, we obtain
\begin{align}\label{s11}
    s_{11}>\frac{2}{k_d\beta_0\beta_1^2}+\frac{k_dL_g^2\beta_0}{2k_i^2}>0.
\end{align}
Since $\beta\geq \beta_0$, we have $8\beta-4\beta_0\geq 4\beta$.
By \eqref{eq:gain_condition}, we have $\bar k_p> \frac{k_d^2 \beta_1^2}{2}$. Consequently, 
\begin{align}\label{s22}
    s_{22}\geq \frac{k_d}{8k_i^2}\beta(4\bar k_p - k_d^2\beta_0\beta)\geq \frac{k_d\bar k_p\beta}{4k_i^2}>\frac{ k_d^3 \beta_0 \beta_1^2}{8 k_i^2}>0.
\end{align}
Since $\beta_0\leq \beta$, we have
\[
0<\frac{k_d}{2k_i}\left(1-\frac{\beta_0}{2\beta}\right)
\leq \frac{k_d}{2k_i}.
\]
Since $-L_g\leq \al\leq L_g$ and $\beta\leq \beta_1$, we have
\[
0<\frac{(\al+L)k_d^2\beta_0\beta}{8k_i^2}
<\frac{L_g\beta_0\beta_1k_d^2}{4k_i^2}.
\]
Consequently, we have
\begin{align*}
    |s_{12}|\leq \max\{\frac{k_d}{2k_i},\frac{L_g\beta_0\beta_1 k_d^2}{4k_i^2}\}.
\end{align*}
By \eqref{s11} and \eqref{s22},
\begin{align*}
    s_{11}s_{22}
    &>\left(\frac{2}{\beta_0\beta_1^2 k_d}
    +\frac{L_g^2 k_d \beta_0}{2 k_i^2}\right)
    \frac{ k_d^3 \beta_0 \beta_1^2}{8 k_i^2}\\
    &\quad=
    \frac{k_d^2}{4k_i^2}
    +\frac{ L_g^2\beta_0^2\beta_1^2k_d^2}{16 k_i^4}
    >s_{12}^2.
\end{align*}
Together with $s_{11}>0$, this proves that $S\succ 0$. Since $2k_ip_{13}>0$, the Schur complement argument shows that $R(\alpha,\beta)\succ 0$ for all $\al\in[-L_g,L_g]$ and $\beta\in[\beta_0,\beta_1]$.

Finally, $R(\alpha,\beta)$ depends continuously on $(\alpha,\beta)$, and the set
\[
    \{(\alpha,\beta): \alpha\in[-L_g,L_g],\ \beta\in[\beta_0,\beta_1]\}
\]
is compact. Therefore,
\[
    c:=\min_{\alpha\in[-L_g,L_g],\,\beta\in[\beta_0,\beta_1]}
    \lambda_{\min}\bigl(R(\alpha,\beta)\bigr)>0.
\]
This proves the lemma.
\end{proof}
\subsection{Proof of Theorem 2}\label{sec:proof-nd}

Fix $n\ge2$. We construct a robot manipulator satisfying
Assumption~\ref{ass:robot}.
Write
\[
q=(q_c,q_r),\qquad q_c\in\mathbb R^2,\quad q_r\in\mathbb R^{n-2},
\]
where the $q_r$-component is omitted when $n=2$. Likewise, for any
$v,w\in\mathbb R^n$, write
\[
v=(v_c,v_r),\qquad w=(w_c,w_r),
\qquad v_c,w_c\in\mathbb R^2 .
\]
\noindent\emph{Step 1: Construction of the counterexample robot manipulator.}

Choose $0<\epsilon<1$ and a nonzero vector
$\gamma\in\mathbb R^2$. Define
\[
\phi(r)=2-\epsilon\cos r,~ r\ge 0,\text{ and }\sigma(r)=
\begin{cases}
	\dfrac{\sin r}{r},& r>0,\\[1mm]
	1,& r=0 .
\end{cases}
\]
Note $\nabla_{q_c}\phi(|q_c|) = \epsilon\sigma(|q_c|)q_c$. We set $b(q_c):=\epsilon\sigma(|q_c|)q_c$.
Define
\[
M(q)=
\begin{bmatrix}
	\phi(|q_c|)I_2&0\\
	0&I_{n-2}
\end{bmatrix}.
\]
The Coriolis mapping is defined by
\[
C(q,v)w=
\begin{bmatrix}
	C_2(q_c,v_c)w_c\\
	0
\end{bmatrix},
\]
where
\[
\begin{aligned}
	C_2(q_c,v_c)w_c
	&=
	\frac12(b(q_c)^\top v_c)w_c
	+\frac12(b(q_c)^\top w_c)v_c\\
	&\quad-\frac12(v_c^\top w_c)b(q_c).
\end{aligned}
\]
Equivalently,
\[
C_2(q_c,v_c)
=
\frac12(b(q_c)^\top v_c)I_2
+
\frac12
\left(
v_cb(q_c)^\top-b(q_c)v_c^\top
\right).
\]
Finally, define
\[
\begin{aligned}
U(q)&=\gamma^\top q_c+\cos|q_c|-1,\\
g(q)&=\nabla U(q)
=
\begin{bmatrix}
	\gamma-\sigma(|q_c|)q_c\\
	0
\end{bmatrix}.
\end{aligned}
\]
The functions $\cos|q_c|$ and $\sigma(|q_c|)q_c$ are smooth functions of
$q_c$. Hence $M,C,g$ are smooth.

\noindent\emph{Step 2: Verification of the Euler--Lagrange structural assumptions.}

Since $0<\epsilon<1$, we have $1\leq 2-\epsilon\leq \phi(r)\leq2+\epsilon\leq3$.
Therefore
\[
I_n\leq M(q)\leq 3I_n,
\]
so $(p1)$ holds with $m_0=1$ and $m_1=3$.

Along any differentiable trajectory $q(t)$, we have
\[
\dot M(q)
=
\begin{bmatrix}
	(b(q_c)^\top\dot q_c)I_2&0\\
	0&0
\end{bmatrix}.
\]
Moreover, $2C_2(q_c,\dot q_c)
=
(b(q_c)^\top\dot q_c)I_2
+
\dot q_cb(q_c)^\top
-
b(q_c)\dot q_c^\top$.
Consequently,
\[
\dot M(q)-2C(q,\dot q)
=
\begin{bmatrix}
	b(q_c)\dot q_c^\top-\dot q_cb(q_c)^\top&0\\
	0&0
\end{bmatrix},
\]
which is skew-symmetric. Hence $(p2)$ holds.

For $(p3)$, note that
$
|b(q_c)|
=
\epsilon|\sigma(|q_c|)|\,|q_c|
=
\epsilon|\sin|q_c||
\leq\epsilon $.
Thus, for every $x_c\in\mathbb R^2$, we have
\[
\begin{aligned}
	|C_2(q_c,v_c)x_c| &\leq
	\frac12|b(q_c)||v_c||x_c|+\frac12|b(q_c)||x_c||v_c|\\
	&\quad+\frac12|b(q_c)||v_c||x_c|
	\leq\frac{3\epsilon}{2}|v_c||x_c|.
\end{aligned}
\]
It follows that $\|C(q,v)\|
\leq
\frac{3\epsilon}{2}|v|$,
and $(p3)$ is verified.

The formula for $C_2(q_c,v_c)w_c$ is symmetric in $v_c$ and $w_c$.
Therefore $C(q,v)w=C(q,w)v$, which proves $(p4)$.

Finally, let $h(q_c):=\sigma(|q_c|)q_c$.
Since 
\[
\begin{aligned}
1-\cos |q_c|
&=\sum_{\ell=1}^{\infty}
\frac{(-1)^{\ell+1}}{(2\ell)!}|q_c|^{2\ell}=\sum_{\ell=1}^{\infty}
\frac{(-1)^{\ell+1}}{(2\ell)!}(q_c^\top q_c)^\ell,
\end{aligned}
\]
the function $1-\cos |q_c|$ is smooth on $\mathbb R^2$. Moreover,
$h(q_c)=\nabla_{q_c}(1-\cos |q_c|)$.

Indeed, for $q_c\neq0$,
\[
\nabla_{q_c}(1-\cos |q_c|)
=
\frac{\sin |q_c|}{|q_c|}q_c,
\]
and the identity extends to $q_c=0$ by smoothness.

Thus
\[
g(q)=
\begin{bmatrix}
	\gamma-h(q_c)\\
	0
\end{bmatrix},
\qquad
\nabla^2U(q)
=
\begin{bmatrix}
	-Dh(q_c)&0\\
	0&0
\end{bmatrix}.
\]
It remains to bound $Dh(q_c)$. For $q_c\neq0$, 
a direct differentiation gives
\[
\begin{aligned}
Dh(q_c)=\frac{\sin |q_c|}{|q_c|}I_2+\frac{|q_c|\cos |q_c|-\sin |q_c|}{|q_c|^3}q_cq_c^\top .
\end{aligned}
\]
Equivalently, for every $y\in\mathbb R^2$,
\[
\begin{aligned}
Dh(q_c)y=\frac{\sin |q_c|}{|q_c|}y+\frac{|q_c|\cos |q_c|-\sin |q_c|}{|q_c|^3}q_c(q_c^\top y).
\end{aligned}
\]
We now compute the eigenvalues of $Dh(q_c)$. If $y\perp q_c$, then
$q_c^\top y=0$, and hence $Dh(q_c)y=\frac{\sin |q_c|}{|q_c|}y$.
Thus the tangential eigenvalue is
\[
\lambda_{\mathrm{tan}}(|q_c|)=\frac{\sin |q_c|}{|q_c|}.
\]
On the other hand, in the radial direction $y=q_c$, we have
\[
\begin{aligned}
Dh(q_c)q_c
&=\frac{\sin |q_c|}{|q_c|}q_c+\frac{|q_c|\cos |q_c|-\sin |q_c|}{|q_c|^3}q_c(q_c^\top q_c)\\
&=\frac{\sin |q_c|}{|q_c|}q_c
+\frac{|q_c|\cos |q_c|-\sin |q_c|}{|q_c|}q_c\\
&=\cos |q_c|\, q_c .
\end{aligned}
\]
Thus the radial eigenvalue is
\[
\lambda_{\mathrm{rad}}(|q_c|)=\cos |q_c|.
\]
Since $Dh(q_c)$ is symmetric, for $q_c\neq0$,
\[
\begin{aligned}
\|Dh(q_c)\|
&=\max\left\{
|\lambda_{\mathrm{rad}}(|q_c|)|,
|\lambda_{\mathrm{tan}}(|q_c|)|
\right\}
\\
&=\max\left\{
|\cos |q_c||,
\left|\frac{\sin |q_c|}{|q_c|}\right|
\right\}
\leq1 .
\end{aligned}
\]
By continuity, the same bound holds at $q_c=0$, where $Dh(0)=I_2$.
Consequently,
$$
\|\nabla^2U(q)\|\le1,\qquad q\in\mathbb R^n.$$
Thus $(p5)$ holds with $L_g=1$.



\noindent\emph{Step 3: Closed-loop equation and global existence of solutions.}

We now fix an arbitrary scalar PID gain triple
$(k_p,k_i,k_d)\in\mathbb R^3$ and take the desired setpoint to be
$q_d=0$. Then $e=-q$. We use
\[
    v:=\dot e=-\dot q,
\]
so that $q=-e$ in all occurrences of $M(q),C(q,\cdot)$, and $g(q)$.
By the PID law \eqref{pid}, we have
\[
    \tau=k_i\xi+k_pe+k_dv .
\]
Since the constructed Coriolis mapping is linear in its second argument,
$C(q,-v)(-v)=C(q,v)v$. Therefore the closed-loop system is
\begin{equation}\label{eq:closed-loop-n}
	\left\{
	\begin{aligned}
		\dot\xi&=e,\\
		\dot e &= v,\\
		\dot v
		&=
		M^{-1}(q)\bigl[C(q,v)v+g(q)-k_i\xi-k_pe-k_dv\bigr],
	\end{aligned}
	\right.
\end{equation}
where $q=-e$.
The vector field in \eqref{eq:closed-loop-n} is smooth and globally defined.
Moreover, every solution is forward complete. Indeed, define
\[
\mathcal E(e,v,\xi)
=
\frac12 v^\top M(-e)v+\frac12|e|^2+\frac12|\xi|^2 .
\]
Using the skew-symmetry identity $(p2)$, along solutions of
\eqref{eq:closed-loop-n} we have
\[
\frac{\mathrm{d}}{\mathrm{d}t}M(q(t))-2C(q(t),\dot q(t))
\quad \text{is skew-symmetric}.
\]
Here $\frac{\mathrm{d}}{\mathrm{d}t}M(q(t))$ denotes the time derivative of $M(q(t))$ along
the physical trajectory $q(t)$. Since the present coordinates use
$\dot q(t)=-v(t)$, and since the constructed Coriolis mapping is linear in
its second argument, this identity is equivalently
\[
\frac{\mathrm{d}}{\mathrm{d}t}M(q(t))+2C(q(t),v(t))
\quad \text{is skew-symmetric}.
\]
Hence
\[
\frac12v(t)^\top\frac{\mathrm{d}}{\mathrm{d}t}M(q(t))v(t)
+v(t)^\top C(q(t),v(t))v(t)=0,
\]
and therefore
\[
\begin{aligned}
	\dot{\mathcal E}
	&=
	v^\top(g(q)-k_i\xi-k_pe-k_dv)
	+e^\top v+\xi^\top e .
\end{aligned}
\]
For the constructed system, $g(q)$ is bounded and $M(q)$ is uniformly
positive definite and uniformly bounded. Hence, for a constant $c_K>0$
depending only on the fixed gains and on the constructed system,
\[
\dot{\mathcal E}\leq c_K(1+\mathcal E).
\]
Gronwall's inequality implies that $\mathcal E$ remains bounded on every
finite time interval. Thus every maximal solution of \eqref{eq:closed-loop-n}
is defined on $[0,\infty)$.

\noindent\emph{Step 4: Nonexistence of admissible scalar PID gains.}

The goal of this step is to prove that no scalar PID gain
$K=(k_p,k_i,k_d)\in\mathbb{R}^3$ can achieve global regulation of the
constructed robot manipulator system. 
We distinguish three mutually exclusive cases.

\medskip
\noindent
\textbf{Case 1: $k_i k_d>0$.}

Set $\omega=\sqrt{\frac{k_i}{k_d}}>0$ and define
\[
H(r)
=
\frac{k_i}{k_d}
\left(
\phi(r)+\frac r2\phi'(r)
\right)
+
\sigma(r).
\]
Since
\[
\phi(r)+\frac r2\phi'(r)
=
2-\epsilon\cos r+\frac{\epsilon r}{2}\sin r
\]
and \(k_i/k_d>0\), the oscillatory term \(r\sin r\) implies that \(H\) is
unbounded both above and below. Hence, by the continuity of \(H\) and the
intermediate value theorem, for any prescribed \(k_p\in\mathbb R\), there
exists \(\rho>0\) such that
\[
H(\rho)=k_p .
\]
Let
\[
x_c(t)
=
\rho
\begin{bmatrix}
	\cos\omega t\\
	\sin\omega t
\end{bmatrix},
\quad
x(t)=
\begin{bmatrix}
	x_c(t)\\
	0
\end{bmatrix}.
\]
Then
\[
|x_c(t)|=\rho,\quad
\ddot x_c(t)=-\omega^2x_c(t),\quad
x_c(t)^\top\dot x_c(t)=0 .
\]
We claim that
\[
q(t)=-x(t),\quad
v(t)=\dot x(t),\quad
\xi(t)=
\frac{1}{k_i}
\begin{bmatrix}
	\gamma\\
	0
\end{bmatrix}
-
\frac{k_d}{k_i}\dot x(t)
\]
is an exact solution of \eqref{eq:closed-loop-n}. First,
$\dot q(t)=-\dot x(t)=-v(t)$. Also,
\[
\dot \xi(t)
=
-\frac{k_d}{k_i}\ddot x(t)
=
\frac{k_d}{k_i}\omega^2x(t)
=
x(t)
=
-q(t),
\]
so the integral equation is satisfied. Moreover, because $e=x$ and
$\dot e=\dot x=v$ along this trajectory,
\[
\begin{aligned}
	\tau(t)
	&=
	k_i\xi(t)+k_p e(t)+k_d\dot e(t)\\
	&=
	\begin{bmatrix}
		\gamma\\0
	\end{bmatrix}
	-k_d\dot x(t)+k_px(t)+k_d\dot x(t)\\
	&=
	\begin{bmatrix}
		\gamma\\0
	\end{bmatrix}
	+k_px(t).
\end{aligned}
\]
The $q_r$-components of $q,v,\dot v$ vanish. In the first two
coordinates, using $q_c=-x_c$, $v_c=\dot x_c$, and
$x_c^\top\dot x_c=0$, the right-hand side of the third equation in
\eqref{eq:closed-loop-n} is
\[
C(q,v)v+g(q)-k_i\xi-k_pe-k_dv
\!=\!\!
\begin{bmatrix}
	\begin{gathered}
	\sigma(\rho)
	\!+\!\omega^2\frac{\rho}{2}\phi'(\rho)
	\!-\!k_p
	\\ x_c(t)\end{gathered}\\
	0
\end{bmatrix}.
\]
Since $H(\rho)=k_p$, the last expression equals
\[
\begin{bmatrix}
	-\omega^2\phi(\rho)x_c(t)\\
	0
\end{bmatrix}
=
M(q(t))\dot v(t).
\]
Thus the constructed trajectory is a closed-loop solution.
Taking its value at $t=0$ as the initial condition, uniqueness implies that
the corresponding closed-loop solution is precisely this trajectory.

This solution is periodic and satisfies, for all $t\ge0$,
\[
|e(t)|=|x(t)|=\rho>0,
\qquad
|\dot e(t)|=|\dot x(t)|=\rho\omega>0 .
\]
Therefore $e(t)\not\to0$ and $\dot e(t)\not\to0$. Hence no gain triple
with $k_i k_d>0$ is admissible.

\medskip
\noindent
\textbf{Case 2: $k_i=0$.}

The controller becomes
\[
\tau=k_pe+k_dv .
\]
Consider the forward complete solution with
\[
e(0)=0,\qquad v(0)=0,\qquad \xi(0)=0 .
\]
Suppose, toward a contradiction, that this solution satisfies
\[
e(t)\to0,\qquad v(t)\to0 .
\]
Then $\tau(t)\to0$, and
$
C(q(t),v(t))v(t)\to0
$
because
$
|C(q,v)v|\leq \frac{3\epsilon}{2}|v|^2$.

Since \(e(t)\to0\) and \(v(t)\to0\), we have \(q(t)=-e(t)\to0\).
Moreover,
\[
C(q(t),v(t))v(t)\to0,
\qquad
k_pe(t)+k_dv(t)\to0 .
\]
By the continuity of \(M^{-1}(\cdot)\) and \(g(\cdot)\), the third equation
of the closed-loop system yields
\[
\dot v(t)\to M(0)^{-1}g(0).
\]
Since
\[
M(0)=
\begin{bmatrix}
	(2-\epsilon)I_2&0\\
	0&I_{n-2}
\end{bmatrix},
\qquad
g(0)=
\begin{bmatrix}
	\gamma\\
	0
\end{bmatrix},
\]
we obtain
\[
M(0)^{-1}g(0)
=
\begin{bmatrix}
	(2-\epsilon)^{-1}\gamma\\
	0
\end{bmatrix}
=:a\ne0 .
\]
Therefore, $\dot v(t)\to a\ne0$.

For all sufficiently large $t$, this gives
\[
a^\top\dot v(t)\geq \frac{|a|^2}{2}.
\]
Integrating in time shows that $a^\top v(t)$ grows at least linearly,
contradicting $v(t)\to0$. Thus no gain triple with $k_i=0$ is
admissible.

\noindent\textbf{Case 3: $k_i\ne0$ and $k_i k_d\le0$.}

The regulated equilibrium corresponding to $q_d=0$, if regulation were
achieved, would satisfy
\[
e=0,\qquad v=0,\qquad
\xi=\frac1{k_i}g(0)
=
\frac1{k_i}
\begin{bmatrix}
	\gamma\\
	0
\end{bmatrix}.
\]
Introduce the shifted integral state $\eta=\xi-\frac1{k_i}g(0)$. Then \eqref{eq:closed-loop-n}
becomes
\begin{equation}\label{eq:eta-system-n}
\left\{
	\begin{aligned}
		\dot\eta &= e,\\
		\dot e &= v,\\
	\dot v
		&=M^{-1}(q)\Bigl[-k_i\eta-k_pe-k_dv\\
		&\qquad\quad+C(q,v)v+g(q)-g(0)\Bigr],
	\end{aligned}
\right.
\end{equation}
where $q=-e$. The desired regulated equilibrium is now
$(\eta,e,v)=(0,0,0)$.

We compute the linearization of the right-hand side of
\eqref{eq:eta-system-n} at the origin. Let
\[
\begin{aligned}
F_3(\eta,e,v)
&:=M^{-1}(-e)\Bigl[-k_i\eta-k_pe-k_dv\\
&\qquad\quad+C(-e,v)v+g(-e)-g(0)\Bigr].
\end{aligned}
\]
Since the bracketed term vanishes at the origin, the derivative of
$M^{-1}(-e)$ does not contribute to the linearization. Therefore,
\[
\begin{aligned}
&DF_3(0)(\delta\eta,\delta e,\delta v)\\
=&M^{-1}(0)D\Bigl[-k_i\eta-k_pe-k_dv\\
&\qquad\quad+C(-e,v)v+g(-e)-g(0)\Bigr]_{0}(\delta\eta,\delta e,\delta v).
\end{aligned}
\]
The term $C(-e,v)v$ is $O(|v|^2)$, and hence has zero derivative at the
origin. Moreover, since
\[
g(q)=
\begin{bmatrix}
\gamma-h(q_c)\\0
\end{bmatrix},
\qquad
Dh(0)=I_2,
\]
we have
\[
D[g(-e)-g(0)]_0\delta e
=
\begin{bmatrix}
\delta e_c\\0
\end{bmatrix}.
\]
Thus
\[
\begin{aligned}
&DF_3(0)(\delta\eta,\delta e,\delta v)\\
=&M^{-1}(0)\left[
-k_i\delta\eta-k_p\delta e-k_d\delta v+
\begin{bmatrix}
\delta e_c\\0
\end{bmatrix}
\right].
\end{aligned}
\]
Consequently, the linearized system is
\begin{align*}
    \left\{
    \begin{aligned}
\dot\eta &= e,\\
\dot e &= v,\\
\dot v
&=
M^{-1}(0)\left[-k_i\eta-k_pe-k_dv+
\begin{bmatrix}
e_c&0
\end{bmatrix}^\top\right].
\end{aligned}
    \right.
\end{align*}
Note
\[
M^{-1}(0)
=
\begin{bmatrix}
\frac{1}{2-\epsilon}I_2&0\\
0&I_{n-2}
\end{bmatrix},
\]
the first two coordinate directions $j=1,2$ satisfy
\[
\begin{aligned}
\dot\eta_j&=e_j,\qquad \dot e_j=v_j,\\
\dot v_j&=-\frac{k_i}{2-\epsilon}\eta_j
-\frac{k_p-1}{2-\epsilon}e_j
-\frac{k_d}{2-\epsilon}v_j.
\end{aligned}
\]
Hence each of the first two coordinate directions contributes the scalar
cubic factor
\[
p(s)
=
(2-\epsilon)s^3+k_ds^2+(k_p-1)s+k_i
\]
to the characteristic polynomial of the full linearization.

Under the present assumption $k_i\ne0$ and $k_i k_d\le0$, the polynomial
$p$ has a root in the open right half-plane. If $k_i<0$, then
\[
p(0)=k_i<0,\qquad p(s)\to+\infty
\quad\text{as }s\to+\infty,
\]
so $p$ has a positive real root. If $k_i>0$, then $k_d\le0$. When
$k_d<0$, the sum of the three roots of $p$ is
\[
-\frac{k_d}{2-\epsilon}>0,
\]
so at least one root has positive real part. When $k_d=0$, the sum of the
roots is zero, while
\[
p(0)=k_i>0,\qquad p(s)\to-\infty
\quad\text{as }s\to-\infty.
\]
Thus $p$ has a negative real root $r<0$, and the sum of the real parts of the remaining two roots is $-r>0$. Hence at least one of the remaining roots has positive real part. In all subcases, the linearization of \eqref{eq:eta-system-n} has an eigenvalue in the open right half-plane. Therefore, the origin of \eqref{eq:eta-system-n} is not locally attractive. Hence there exists an initial condition arbitrarily close to the origin whose forward complete solution does not converge to $(\eta,e,v)=(0,0,0)$.

For such a solution, it cannot be true that $e(t) \to 0$ and $v(t) \to 0$.
Indeed, if $e(t)\to0$ and $v(t)\to0$, then Lemma \ref{lem:regulation-implies-full-state} would imply $\eta(t)\to0$, contradicting the choice of the solution. Therefore
\[
e(t)\not\to0
\quad\text{or}\quad
v(t)\not\to0 .
\]
Thus the regulation objective fails. Hence no gain triple with
\[
k_i\ne0,\qquad k_i k_d\le0
\]
is admissible.

The three cases
\[
k_i k_d>0,\quad
k_i=0,\quad
k_i\ne0,\quad \ k_i k_d\le0
\]
are mutually exclusive and cover all scalar PID gain triples in
$\mathbb R^3$. Therefore the constructed manipulator cannot be globally
regulated by any classical scalar-gain PID controller. This proves the
theorem.
\hfill$\square$

\begin{lemma}
\label{lem:regulation-implies-full-state}
Consider the shifted system \eqref{eq:eta-system-n} associated with the constructed manipulator, and suppose that $k_i\neq0$. Let $(\eta(t),e(t),v(t))$ be any forward complete solution of \eqref{eq:eta-system-n}. If
\[
e(t)\to0,\quad v(t)\to0,\quad \text{as } t\to\infty,
\]
then $\eta(t)\to0$ as $t\to\infty.$
\end{lemma}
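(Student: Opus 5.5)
The plan is to solve the third equation of \eqref{eq:eta-system-n} for $\eta$ and show that everything else in it tends to zero. Since $k_i\neq0$,
\[
\eta(t)=\frac{1}{k_i}\Bigl[-M(q(t))\dot v(t)-k_pe(t)-k_dv(t)+C(q(t),v(t))v(t)+g(q(t))-g(0)\Bigr],
\]
with $q=-e$. Under the hypotheses $e\to0$ and $v\to0$, each term except $M(q)\dot v$ already tends to zero:
\begin{itemize}
\item $k_pe+k_dv\to0$ trivially;
\item $|C(q,v)v|\le\frac{3\epsilon}{2}|v|^2\to0$;
\item $g(q)-g(0)\to0$ by the global Lipschitz bound $L_g=1$.
\end{itemize}
Also $M(q)$ is bounded by $3I$. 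It therefore suffices to prove $\dot v(t)\to0$.

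\emph{Key step.} Write $w:=M(q)\dot v$, so that $w=-k_i\eta+\rho(t)$ with $\rho(t)\to0$. Differentiating formally gives $\dot w=-k_ie+\dot\rho$, and $-k_ie\to0$. Rather than differentiate $\rho$, which involves $\dot v$ and could be delicate, I would argue on $\eta$ directly. We have $\dot v=M^{-1}(q)(-k_i\eta+\rho)$ and $\dot\eta=e\to0$. So $\eta$ is slowly varying: on any window $[T,T+1]$ its oscillation is at most $\sup_{t\ge T}|e(t)|\to0$. Suppose, toward a contradiction, that $\eta\not\to0$. Then there are $\delta>0$ and times $t_k\to\infty$ with $|\eta(t_k)|\ge\delta$. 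For large $k$, on $[t_k,t_k+1]$ we have $|\eta(t)-\eta(t_k)|\le\delta/10$ and $|\rho(t)|\le|k_i|\delta/10$. Set $u:=\eta(t_k)/|\eta(t_k)|$. Using $m_0=1\le M(q)\le 3=m_1$, the quantity $u^\top M^{-1}(q)(-k_i\eta)$ has the fixed sign of $-k_i$ and magnitude at least $|k_i|\cdot\frac{9\delta}{10}\cdot\frac13$ minus small errors. Since $M^{-1}(q)$ is positive definite, $u^\top M^{-1}(q)u\ge1/3$ and the cross term is small. Consequently $|u^\top\dot v(t)|\ge c|k_i|\delta$ on the whole window, with a constant sign. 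Integrating over $[t_k,t_k+1]$ gives
\[
|u^\top(v(t_k+1)-v(t_k))|\ge c|k_i|\delta,
\]
which contradicts $v\to0$.

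\emph{Main obstacle.} The one delicate point is the sign control of $u^\top M^{-1}(q(t))\eta(t)$, because $M^{-1}$ varies along the trajectory. It is handled by decomposing $\eta(t)=\eta(t_k)+O(\delta/10)$. Then $u^\top M^{-1}\eta(t_k)=|\eta(t_k)|\,u^\top M^{-1}u\ge\delta/3$, while the remainder is at most $\|M^{-1}\|\delta/10\le\delta/10$. This leaves a margin of at least $\delta(1/3-1/10-1/10)>0$ after also absorbing $\rho$. Everything else is routine, and this yields $\eta(t)\to0$.
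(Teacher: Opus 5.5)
Your argument is correct, but it takes a different route from the paper.

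The paper integrates $k_i\eta=-M(q)\dot v+R(t)$ over a window $[t,t+T]$. It then integrates $\int_t^{t+T}M(q(s))\dot v(s)\,\mathrm{d}s$ by parts, and uses the skew-symmetry $(p2)$ together with $(p3)$ to get $\bigl\|\tfrac{\mathrm{d}}{\mathrm{d}t}M(q(t))\bigr\|\le 2L_c|v(t)|$. This shows the moving averages of $\eta$ vanish, and $\dot\eta=e\to0$ then upgrades this to pointwise convergence.

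You instead run a Barbalat-type contradiction. Since $\dot\eta=e\to0$, $\eta$ is nearly frozen on unit windows. Because $\tfrac13 I\le M^{-1}(q)\le I$, the projection $u^\top\dot v$ keeps the sign of $-k_i$ on $[t_k,t_k+1]$, with magnitude at least $|k_i|\delta(\tfrac13-\tfrac1{10}-\tfrac1{10})=\tfrac{2}{15}|k_i|\delta$. That contradicts $v(t_k+1)-v(t_k)\to0$.

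I checked your estimates and they hold. The heuristic ``$\tfrac{9\delta}{10}\cdot\tfrac13$'' in your key step is superseded by the correct decomposition $\eta(t)=\eta(t_k)+O(\delta/10)$ in your last paragraph. Also, you announce that it suffices to show $\dot v\to0$ but then prove $\eta\to0$ directly; these are equivalent here, so no harm is done.

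What each route buys:
\begin{itemize}
\item Yours never differentiates $M(q(t))$ and uses neither $(p2)$ nor any bound on $\dot M$. It needs only the two-sided bounds $(p1)$ and the vanishing of the remainder. It therefore shows that the lemma is driven by $\eta$ entering $\dot v$ through the uniformly coercive coefficient $-k_iM^{-1}(q)$, rather than by the skew-symmetry emphasized in Remark~4. In that remark's counterexample, $\eta$ simply does not appear in $\dot v$.
\item The paper's route is a direct estimate rather than a contradiction. It needs only an upper bound on $M$ plus control of $\tfrac{\mathrm{d}}{\mathrm{d}t}M(q(t))$, which is exactly what $(p2)$--$(p3)$ supply. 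It does not need the lower bound on $M^{-1}$ that your sign argument relies on.
\end{itemize}
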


\begin{proof}
	Let $(\eta(t),e(t),v(t))$ be a forward complete solution such that
	\[
	e(t)\to0,\qquad v(t)\to0 .
	\]
	Throughout the proof, $q(t)=-e(t)$. Hence $q(t)\to0$.
	From \eqref{eq:eta-system-n},
	\[
	k_i\eta
	=
	-M(q)\dot v-k_pe-k_dv+C(q,v)v+g(q)-g(0).
	\]
	Define
	\[
	R(t)
	=
	-k_pe(t)-k_dv(t)+C(q(t),v(t))v(t)+g(q(t))-g(0).
	\]
	By $(p3)$,
	\[
	|C(q(t),v(t))v(t)|
	\leq
	\|C(q(t),v(t))\|\,|v(t)|
	\leq
	L_c|v(t)|^2 .
	\]
	Together with $q(t)\to0$, $v(t)\to0$, and the continuity of $g$, this gives
	\[
	R(t)\to0 .
	\]
	Thus
	\[
	k_i\eta(t)=-M(q(t))\dot v(t)+R(t).
	\]
	
	Fix $T>0$. Integrating over $[t,t+T]$, we obtain
	\[
	\begin{aligned}
	k_i\!\int_t^{t+T}\!\eta(s)\,\mathrm{d}s=\!\int_t^{t+T}\! R(s)\,\mathrm{d}s -\int_t^{t+T}\! M(q(s))\dot v(s)\,\mathrm{d}s.
	\end{aligned}
	\]
	Since $R(t)\to0$ and $T$ is fixed,
	\[
	\int_t^{t+T}R(s)\,\mathrm{d}s\to0 .
	\]
	
	We first show that
	\[
	\int_t^{t+T}M(q(s))\dot v(s)\,\mathrm{d}s\to0 .
	\]
	By integration by parts,
	\[
	\begin{aligned}
	&\int_t^{t+T}M(q(s))\dot v(s)\,\mathrm{d}s\\
    =&M(q(t+T))v(t+T)-M(q(t))v(t)\\
    &-\int_t^{t+T}\frac{\mathrm{d}}{\mathrm{d}s}M(q(s))v(s)\,\mathrm{d}s .
	\end{aligned}
	\]
	The endpoint terms tend to zero because $M(q)$ is uniformly bounded and
	$v(t)\to0$.
	
	For the remaining integral, recall that $(p2)$ concerns the physical
	velocity $\dot q(t)$ and the time derivative of $M(q(t))$ along the
	physical trajectory:
	\[
	\frac{\mathrm{d}}{\mathrm{d}t}M(q(t))-2C(q(t),\dot q(t))
	\quad \text{is skew-symmetric}.
	\]
	In the present coordinates $\dot q(t)=-v(t)$. Since the constructed
	Coriolis mapping is linear in its second argument,
	$C(q(t),-v(t))=-C(q(t),v(t))$. Therefore the preceding identity becomes
	\[
	\frac{\mathrm{d}}{\mathrm{d}t}M(q(t))+2C(q(t),v(t))
	\]
	skew-symmetric along solutions. Therefore, for every unit vector $x$,
	\[
	x^\top \frac{\mathrm{d}}{\mathrm{d}t}M(q(t))x
	=
	-2x^\top C(q(t),v(t))x .
	\]
	Because $M(q)$ is symmetric, $\frac{\mathrm{d}}{\mathrm{d}t}M(q(t))$ is symmetric. Hence
	\[
	\begin{aligned}
	\left\|\frac{\mathrm{d}}{\mathrm{d}t}M(q(t))\right\|
	&=\sup_{|x|=1}\left|x^\top \frac{\mathrm{d}}{\mathrm{d}t}M(q(t))x\right|\\
	&\leq2\|C(q(t),v(t))\|
	\leq2L_c|v(t)|.
	\end{aligned}
	\]
	Therefore, $\left|\frac{\mathrm{d}}{\mathrm{d}t}M(q(t))v(t)\right|
	\leq
	2L_c|v(t)|^2$.
	
	Since $v(t)\to0$, it follows that, for fixed $T>0$,
	\[
	\int_t^{t+T}\frac{\mathrm{d}}{\mathrm{d}s}M(q(s))v(s)\,\mathrm{d}s\to0 .
	\]
	Consequently,
	$
	\int_t^{t+T}M(q(s))\dot v(s)\,\mathrm{d}s\to0.
	$
	Since $k_i\neq0$,
	$
	\int_t^{t+T}\eta(s)\,\mathrm{d}s\to0 .
	$
	
	Finally, since $\dot\eta=e$ and $e(t)\to0$,
	\[
	\begin{aligned}
		\left|
		\eta(t)-\frac1T\int_t^{t+T}\eta(s)\,\mathrm{d}s
		\right|
		&\leq
		\frac1T\int_t^{t+T}|\eta(t)-\eta(s)|\,\mathrm{d}s \\
		&\leq
		\frac1T\int_t^{t+T}\int_t^s|e(\ell)|\,\mathrm{d}\ell\,\mathrm{d}s \\
		&\leq
		T\sup_{\ell\in[t,t+T]}|e(\ell)|
		\to0 .
	\end{aligned}
	\]
	Since
	\[
	\frac1T\int_t^{t+T}\eta(s)\,\mathrm{d}s\to0,
	\]
	we conclude that
	\[
	\eta(t)\to0 ,
	\]and the proof of Lemma 2 is complete.
\end{proof}

\textbf{Remark 4 }
	Lemma~\ref{lem:regulation-implies-full-state} is not a consequence of the
	integrator chain
	\[
	\dot\eta=e,\qquad \dot e=v
	\]
	alone. Its proof uses the particular Euler--Lagrange form of the third
	equation in \eqref{eq:eta-system-n}. More precisely, the identity
	\[
	\frac{\mathrm{d}}{\mathrm{d}t}M(q(t))-2C(q(t),\dot q(t))
	\quad \text{is skew-symmetric}
	\]
	is applied with the physical velocity $\dot q(t)=-v(t)$, and therefore
	becomes
	\[
	\frac{\mathrm{d}}{\mathrm{d}t}M(q(t))+2C(q(t),v(t))
	\quad \text{is skew-symmetric}.
	\]
	Together with the growth bound on $C(q,v)$, this allows one to show that,
	whenever $v(t)\to0$,
	\[
	\int_t^{t+T} M(q(s))\dot v(s)\,\mathrm{d}s\to0
	\]
	for each fixed $T>0$. This is the key step that forces the moving average
	of $\eta(t)$ to converge to zero, and then $\dot\eta=e\to0$ converts this
	average convergence into pointwise convergence.
	
	Such a conclusion is false for a general third-order system of the form
	\[
	\dot\eta=e,\quad
	\dot e=v,\quad
	\dot v=F(\eta,e,v).
	\]
	For example, consider the scalar smooth system
	\[
	\dot\eta=e,\quad
	\dot e=v,\quad
	\dot v=-e-2v .
	\]
	Then $e$ satisfies
	\[
	\ddot e+2\dot e+e=0,
	\]
	and hence
	\[
	e(t)\to0,\qquad v(t)\to0
	\]
	for every initial condition. However, if
	\[
	\eta(0)=1,\quad e(0)=0,\quad v(0)=0,
	\]
	then
	\[
	e(t)\equiv0,\quad v(t)\equiv0,\quad \eta(t)\equiv1.
	\]
	Thus $e(t)\to0$ and $v(t)\to0$ do not imply $\eta(t)\to0$ for a general
	third-order system. The implication in Lemma~\ref{lem:regulation-implies-full-state}
	therefore reflects the mechanical structure of \eqref{eq:eta-system-n}, rather
	than a generic property of cascaded integrator systems.
   
\section{Conclusion}\label{sec:conclusion}
To address a long-standing open problem  on global regulatability of the classical PID control for robot manipulators, we provide a fairly complete and degree-of-freedom-dependent solution for PID control with triple parameters $(k_p,k_i,k_d)$ in this paper. 
To be specific, in one degree of freedom, an explicit set of PID gain inequalities guarantees global exponential regulation for every initial condition and setpoint. In dimension $n\ge2$, the same class of assumptions does not imply global regulatability by scalar PID gains: a smooth mechanical counterexample was constructed for which every scalar gain triple fails for some initial condition and setpoint. These results indicate that multi-degree-of-freedom robot manipulator regulation by PID requires additional structure, richer gain matrices, model-dependent compensation, or modified nonlinear integral actions beyond the scalar-gain architecture considered here.


\enlargethispage{2\baselineskip}

\end{document}